\begin{document}
\title{Logistic Regression for Massive Data with Rare Events}
\author{HaiYing Wang\\
  Department of Statistics, University of Connecticut}
\maketitle
\begin{abstract}
This paper studies binary logistic regression for rare events data, or imbalanced data, where the number of events (observations in one class, often called cases) is significantly smaller than the number of nonevents (observations in the other class, often called controls). We first derive the asymptotic distribution of the maximum likelihood estimator (MLE) of the unknown parameter, which shows that the asymptotic variance convergences to zero in a rate of the inverse of the number of the events instead of the inverse of the full data sample size. This indicates that the available information in rare events data is at the scale of the number of events instead of the full data sample size. Furthermore, we prove that under-sampling a small proportion of the nonevents, the resulting under-sampled estimator may have identical asymptotic distribution to the full data MLE. This demonstrates the advantage of under-sampling nonevents for rare events data, because this procedure may significantly reduce the computation and/or data collection costs. Another common practice in analyzing rare events data is to over-sample (replicate) the events, which has a higher computational cost. We show that this procedure may even result in efficiency loss in terms of parameter estimation.
\end{abstract}

\section{Introduction}
Big data with rare events in binary responses, also called imbalanced data, are data in which the number of events (observations for one class of the binary response) is much smaller than the number of non-events (observations for the other class of the binary response). In this paper we also call the events ``cases'' and can the nonevents ``controls''. Rare events data are common in many scientific fields and applications. However, several important questions remain unanswered that are essential for valid data analysis and appropriate decision-making. For example, should we consider the amount of information contained in the data to be at the scale of the full-data sample size (very large) or the number of cases (relatively small)? 
Rare events data provide unique challenges and opportunities for sampling. On the one hand, sampling will not work without looking at responses because the probability of not selecting a rare case is high. On the other hand, since the rare cases are more informative than the controls, is it possible to use a small proportion of the full data to preserve most or all of the relevant information in the data about unknown parameters?
A common practice when analyzing rare events data is to under-sample 
the controls and/or over-sample (replicate) the cases. Is there any information loss when using this approach? %
This paper provides a rigorous theoretical analysis on the aforementioned questions in the context of parameter estimation. Some answers may be counter-intuitive. For example, keeping all the cases, there may be no efficiency loss at all for under-sampling controls; on the other hand, using all the controls and over-sampling cases may reduce estimation efficiency.

Rare events data, or imbalanced data, have attracted a lot of attentions in machine learning and other quantitative fields, such as \cite{Japkowicz2000,king2001logistic,chawla2004editorial,estabrooks2004multiple,owen2007infinitely,sun2007cost,chawla2009data,rahman2013addressing,fithian2014local,lemaitre2017imbalanced}. %
A commonly implemented approach in practice is to try balancing the data by under-sampling controls \citep{drummond2003c4,Zhou2009Exploratory} and/or over-sampling cases \citep{chawla2002smote,Han2005Borderline,mathew2017classification,douzas2017self}. 
However, most existing investigations focus on algorithms and methodologies for classification. Theoretical analyses of the effects of under-sampling and over-sampling in terms of parameter estimation are still rare. %

\cite{king2001logistic} considered logistic regression in rare events data and focused on correcting the biases in estimating the regression coefficients and probabilities. 
\cite{fithian2014local} utilized the special structure of logistic regression models to design a novel local case-control sampling method. These investigations obtained theoretical results based on the the regular assumption that the probability of event occurring is fixed and does not go to zero. This assumption rules out the scenario of extremely imbalanced data, because for extremely imbalanced data, it is more appropriate to assume that the event probability goes to zero. 
\cite{owen2007infinitely}'s investigation did not require this fixed-probability assumption. He assumed that the number of rare cases is fixed, and derived the non-trivial point limit of the slope parameter estimator in logistic regression. However, the convergence rate and distributional properties of this estimator were not investigated. 
In this paper, we obtain convergence rates and asymptotic distributions of parameter estimators under the assumption that both the number of cases and the number of controls are random, and they grow large in rates that the number of cases divided by the number of controls decays to zero. This is the first study that provides distributional results for rare events data with a decaying event rate, and it gives the following indications. %
\begin{itemize}
\item The convergence rate of the maximum likelihood estimator (MLE) is at the inverse of the number of cases instead of the total number of observations. This means that the amount of available information about unknown parameters in the data may be limited even the full data volume is massive.
\item There maybe no efficiency loss at all in parameter estimation if one removes most of the controls in the data, because the control under-sampled estimators may have an asymptotic distribution that is identical to that of the full data MLE. %
\item Besides higher computational cost, over-sampling cases may result in estimation efficiency loss, because the asymptotic variances of the resulting estimators may be larger than that of the full data MLE.
\end{itemize}

The rest of the paper is organized as follows. We introduce the model setup and related assumptions in Section~\ref{sec:model-setups-assumpt}, and derive the asymptotic distribution for the full data MLE. We investigate under-sampled estimators in Section~\ref{sec:effic-under-sampl} and study over-sampled estimators in Section~\ref{sec:efficiency-loss-due}. Section~\ref{sec:numer-demonstr} presents some numerical experiments, and Section~\ref{sec:disc-future-rese} concludes the paper and points out some necessary future research. All the proofs of theoretical findings in this paper are presented in the supplementary material. 
\black

\section{Model setups and assumptions}
\label{sec:model-setups-assumpt}
Let $\Dn=\{(\x_i,y_i), i=1, ..., n\}$ be independent data of size $n$ from a logistic regression model,
\begin{equation}\label{eq:1}
  \Pr(y=1|\x)=p(\alpha,\bbeta)
  =\frac{e^{\alpha+\x\tp\bbeta}}{1+e^{\alpha+\x\tp\bbeta}}.
\end{equation}
Here $\x\in\mathbb{R}^d$ is the covariate, $y\in\{0,1\}$ is the binary class label, $\alpha$ is the intercept parameter, and $\bbeta$ is the slope parameter vector. For ease of presentation, denote $\btheta=(\alpha,\bbeta\tp)\tp$ as the full vector of regression coefficient, and define $\z=(1,\x\tp)\tp$ accordingly. This paper focuses on estimating the unknown $\btheta$.

If $\btheta$ is fixed (does not change with $n$ changing), then model~(\ref{eq:1}) is just the regular logistic regression model, and classical likelihood theory shows that the MLE based on the full data $\Dn$ converges at a rate of $n^{-1/2}$. A fixed $\btheta$ implies that $\Pr(y=1)=\Exp\{\Pr(y=1|\x)\}$ is also a fixed constant bounded away from zero.  However, for rare events data, because the event rate is so low in the data, it is more appropriate to assume that $\Pr(y=1)$ approaches zero in some way. We discuss how to model this scenario in the following. 

Let $n_1$ and $n_0$ be the numbers cases (observations with $y_i=1$) and controls (observations with $y_i=0$), respectively, in $\Dn$. Here, $n_1$ and $n_0$ are random because they are summary statistics about the observed data, i.e., $n_1=\sumn y_i$ and $n_0=n-n_1$. For rare events data, $n_1$ is much smaller than $n_0$.  Thus, for asymptotic investigations, it is reasonable to assume that $n_1/n_0\rightarrow0$, or equivalently $n_1/n\rightarrow0$, in probability, as $n\rightarrow\infty$. For big data with rare events, there should be a fair amount of cases observed, so it is appropriate to assume that $n_1\rightarrow\infty$ in probability. To model this scenario, we assume that the marginal event probability $\Pr(y=1)$ satisfies that as $n\rightarrow\infty$,
\begin{equation}\label{eq:24}
  \Pr(y=1)\rightarrow0 \quad\text{and}\quad n\Pr(y=1)\rightarrow\infty.
\end{equation}
We accommodate this condition by assuming that the true value of $\bbeta$, denoted as $\bbeta_t$, is fixed while the true value of $\alpha$, denoted as $\alpha_t$, goes to negative infinity in a certain rate. Specifically, we assume $\alpha_t\rightarrow-\infty$ as $n\rightarrow\infty$ in a rate such that 
\begin{align}
  \frac{n_1}{n}&=\Pr(y=1)\{1+\op\}%
    =\Exp\bigg(\frac{e^{\alpha_t+\bbeta_t\tp\x}}
  {1+e^{\alpha_t+\bbeta_t\tp\x}}\bigg)\{1+\op\},\label{eq:2}
\end{align}
where $\op$ means a term that converges to zero in probability, i.e., a term that is arbitrarily small with probability approaching one. The assumption of a diverging $\alpha_t$ with a fixed $\bbeta_t$ means that the baseline probability of a rare event is low, and the effect of the covariate does not change the order of the probability for a rare event to occur. This is a very reasonable assumption for many practical problems. For example, although making phone calls when driving may increase the probability of car accidents, it may not make car accidents a high-probability event.

\subsection{How much information do we have in rare events data}
To demonstrate how much information is really available in rare events data, we derive the asymptotic distribution of the MLE for model~(\ref{eq:1}) in the scenario described in~(\ref{eq:24}) and \eqref{eq:2}. The MLE based on the full data $\Dn$, say $\hbeta$, is the maximizer of 
\begin{equation}
  \ell(\btheta)
  =\sumn\big\{y_i\z_i\tp\btheta-\log(1+e^{\z_i\tp\btheta})\big\},
\end{equation}
which is also the solution to the following equation,
\begin{equation}
  \dot\ell(\btheta)
  =\sumn\big\{y_i-p_i(\alpha,\bbeta)\big\}\z_i=0,
\end{equation}
where $\dot\ell(\btheta)$ is the gradient of the log-likelihood $\ell(\btheta)$. 

The following Theorem gives the asymptotic normality of the MLE  $\hbeta$ for rare events data. %
\begin{theorem}\label{thm:1}
  If $\Exp(e^{t\|\x\|})<\infty$ for any $t>0$ and $\Exp(e^{\bbeta_t\tp\x}\z\z\tp)$ is a positive-definite matrix, then under the conditions in (\ref{eq:24}) and (\ref{eq:2}), as $n\rightarrow\infty$,
  \begin{equation}\label{eq:3}
    \sqrt{n_1}(\htheta-\btheta_t)
    \longrightarrow \Nor\big(\0,\ \V_f\big),
  \end{equation}
  in distribution, where
  \begin{align}
    \V_f&=\Exp\big(e^{\bbeta_t\tp\x}\big)\M_{f}^{-1}, \qquad\text{and}\\
    \M_{f}&=\Exp\big(e^{\bbeta_t\tp\x}\z\z\tp\big)
                   =\Exp\left\{e^{\bbeta_t\tp\x}
  \begin{pmatrix}
    1 & \x\tp\\
    \x & \x\x\tp
  \end{pmatrix}\right\}.
\end{align}
\end{theorem}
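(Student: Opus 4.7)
The plan is to use the convex-process argmax technique, which sidesteps a separate consistency argument and is well-suited to the drifting-parameter regime here. Define the localized log-likelihood increment
\[
  f_n(\mathbf{u})=\ell(\btheta_t+n_1^{-1/2}\mathbf{u})-\ell(\btheta_t),
\]
which is concave in $\mathbf{u}$ and whose unique maximizer is precisely $\sqrt{n_1}(\htheta-\btheta_t)$. By the standard argmax lemma for concave processes, it then suffices to show marginal convergence in distribution of $f_n(\mathbf{u})$, at each fixed $\mathbf{u}$, to a concave limit with a unique maximizer. A second-order Taylor expansion gives
\[
  f_n(\mathbf{u})=n_1^{-1/2}\mathbf{u}\tp\dot\ell(\btheta_t)
  +\tfrac12\mathbf{u}\tp\bigl\{n_1^{-1}\ddot\ell(\tilde\btheta_n)\bigr\}\mathbf{u},
\]
with $\tilde\btheta_n$ on the segment from $\btheta_t$ to $\btheta_t+n_1^{-1/2}\mathbf{u}$, so the task reduces to two triangular-array statements: a central limit theorem for the score, $n_1^{-1/2}\dot\ell(\btheta_t)\longrightarrow\Nor(\0,\,\M_f/\Exp(e^{\bbeta_t\tp\x}))$ in distribution, and a uniform (over the shrinking $n_1^{-1/2}$-neighborhood) weak law for the Hessian, $n_1^{-1}\ddot\ell(\tilde\btheta_n)\longrightarrow -\M_f/\Exp(e^{\bbeta_t\tp\x})$ in probability. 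Granted these, $f_n(\mathbf{u})$ converges to $\mathbf{u}\tp\mathbf{W}-\tfrac12\mathbf{u}\tp\{\M_f/\Exp(e^{\bbeta_t\tp\x})\}\mathbf{u}$ with $\mathbf{W}\sim\Nor(\0,\,\M_f/\Exp(e^{\bbeta_t\tp\x}))$, whose maximizer $\Exp(e^{\bbeta_t\tp\x})\M_f^{-1}\mathbf{W}$ has covariance precisely $\V_f$.

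For the score CLT I apply Lindeberg--Feller to $n_1^{-1/2}\sumn(y_i-p_i)\z_i$. The crucial analytic step is the dominated-convergence limit
\[
  e^{-\alpha_t}\Exp\{p(\alpha_t,\bbeta_t)(1-p(\alpha_t,\bbeta_t))\z\z\tp\}\longrightarrow \M_f,
\]
since the integrand converges pointwise to $e^{\bbeta_t\tp\x}\z\z\tp$ and is dominated by the same envelope, which is integrable thanks to $\Exp(e^{t\|\x\|})<\infty$ for all $t>0$. Combining this with the identity $n_1/n=e^{\alpha_t}\Exp(e^{\bbeta_t\tp\x})\{1+\op\}$, itself a consequence of (\ref{eq:2}) after dividing through by $e^{\alpha_t}$ and applying dominated convergence, identifies the limiting variance of the score as $\M_f/\Exp(e^{\bbeta_t\tp\x})$. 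Lindeberg's condition then reduces to $n_1^{-1}\sumn\Exp\{\|\z_i\|^2\mathbf{1}(\|\z_i\|>\varepsilon\sqrt{n_1})\}\to 0$ (using $|y_i-p_i|\le 1$), which is immediate from the exponential moment on $\|\x\|$.

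The Hessian is handled analogously: $n_1^{-1}\sumn p_i(1-p_i)\z_i\z_i\tp\longrightarrow\M_f/\Exp(e^{\bbeta_t\tp\x})$ in probability by the same dominated-convergence machinery. Strengthening this to uniform convergence on the shrinking ball $\{\btheta_t+n_1^{-1/2}\mathbf{v}:\|\mathbf{v}\|\le C\}$ exploits the smoothness of $p(1-p)$ together with $n_1^{-1/2}\mathbf{v}\tp\z_i=o_P(1)$ uniformly over the ball (again courtesy of the exponential moment on $\|\x\|$), so that the perturbation from $\btheta_t$ to $\tilde\btheta_n$ is absorbed without the need for a separate consistency proof.

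The main obstacle throughout is the triangular-array structure introduced by the drifting intercept $\alpha_t\to-\infty$: one must simultaneously extract the correct $e^{\alpha_t}$ normalization so that the natural rate becomes $n_1^{1/2}$ rather than $n^{1/2}$, justify dominated convergence when the integrand itself depends on $n$ through $\alpha_t$, and establish uniformity on a vanishing neighborhood. The exponential-moment hypothesis $\Exp(e^{t\|\x\|})<\infty$ for all $t>0$ is the decisive tool that supplies dominating envelopes at each of these steps.
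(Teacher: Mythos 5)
Your overall strategy is the same as the paper's: localize the concave log-likelihood, invoke the argmax lemma for concave processes (the paper uses Hjort and Pollard's Basic Corollary), then prove a triangular-array CLT for the score and a law of large numbers for the Hessian on a shrinking neighborhood, with dominated convergence supplying every limit and the all-orders exponential moment supplying every dominating envelope. Your identification of the limiting variance via $e^{-\alpha_t}\Exp\{p(1-p)\z\z\tp\}\to\M_f$ and the conversion $n_1/n=e^{\alpha_t}\Exp(e^{\bbeta_t\tp\x})\{1+\op\}$ also match the paper exactly.

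There is, however, one genuine gap: the Lindeberg condition. You claim it reduces, via $|y_i-p_i|\le1$, to $n_1^{-1}\sumn\Exp\{\|\z_i\|^2I(\|\z_i\|>\varepsilon\sqrt{n_1})\}\to0$, but discarding the factor $p_i(1-p_i)$ at this point throws away the crucial $e^{\alpha_t}$, and the reduced condition is false in general. The left-hand side equals $(n/n_1)\Exp\{\|\z\|^2I(\|\z\|>\varepsilon\sqrt{n_1})\}$ with $n/n_1$ of order $e^{-\alpha_t}\to\infty$, while the tail expectation decays only like $e^{-t\varepsilon\sqrt{n_1}}$; since conditions (\ref{eq:24})--(\ref{eq:2}) permit $n_1\to\infty$ arbitrarily slowly (e.g.\ $n_1$ of order $\log n$ with $e^{-\alpha_t}$ of order $n/\log n$), the product need not vanish. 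The correct route --- the one the paper takes --- is to condition on $y$ \emph{before} bounding: the $y=1$ contribution carries the factor $\Pr(y=1|\x)=p(\btheta_t)\le e^{\alpha_t}e^{\bbeta_t\tp\x}$ and the $y=0$ contribution carries $\{p(\btheta_t)\}^2$, so the Lindeberg sum is bounded by $ne^{\alpha_t}\Exp\{e^{\|\bbeta_t\|\|\x\|}\|\z\|^2I(\|\z\|>\varepsilon a_n)\}$ plus a smaller term, which is $o(ne^{\alpha_t})=o(a_n^2)$ by dominated convergence. With that repaired (and a one-line Slutsky step to pass from the deterministic normalization $a_n=\sqrt{ne^{\alpha_t}}$ to the random $\sqrt{n_1}$, since the argmax lemma is most cleanly applied with a deterministic scale), your argument goes through.
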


\begin{remark}\normalfont
  The result in~(\ref{eq:3}) shows that the convergence rate of the full-data MLE is at the order of $n_1^{-1/2}$, i.e, $\htheta-\btheta_t=O_P(n_1^{-1/2})$. This is different from the classical result of $\htheta-\btheta_t=O_P(n^{-1/2})$ for the case that $\Pr(y=1)$ is a fixed constant.
  Theorem~\ref{thm:1} indicates that for rare events data, the real amount of available information is actually at the scale of $n_1$ instead of $n$. A large volume of data does not mean that we have a large amount of information. 
\end{remark}

\section{Efficiency of under-sampled estimators}
\label{sec:effic-under-sampl}
Theorem~\ref{thm:1} in the previous section shows that the full-data MLE has a convergence rate of $n_1^{-1/2}$.  %
If we under-sample controls to reduce the number of controls to the same level of $n_1$, whether the resulting estimator has the full-data estimator convergence rate of $n_1^{-1/2}$? If so, one can significantly improve the computational efficiency and reduce the storage requirement for massive data. Furthermore, will under-sampling controls causes any estimation efficiency loss (an enlarged asymptotic variance)? This section answers the aforementioned questions. 

From the full data set $\Dn=\{(\x_1,y_1), ..., (\x_n,y_n)\}$, we want to use all the cases (data points with $y_i=1$) while only select a subset for the controls (data points with $y_i=0$). Specifically, let $\pi_0$ be the probability that each data points with $y_i=0$ is selected in the subset. Let $\delta_i\in\{0, 1\}$ be the binary indicator variable that signifies if the $i$-th observation is included in the subset, i.e., include the $i$-th observation into the sample if $\delta_i=1$ and ignore the $i$-th observation if $\delta_i=0$. Here, we define the sampling plan by assigning
\begin{equation}\label{eq:16}
  \delta_i=y_i+(1-y_i)I(u_i\le \pi_0), \quad i=1, ..., n,
\end{equation}
where $u_i$$\sim \mathbb{U}(0,1)$, $i=1, ..., n$, are independent and identically distributed (i.i.d.) random variables with the standard uniform distribution. This is a mixture of deterministic selection and random sampling. The resulting control under-sampled data include all rare cases (with $y_i=1$) and the number of controls (with $y_i=0$) is on average at the order of $n_0\pi_0$. The average sample size for the under-sampled data given the full-data is $\sumn\Exp(\delta_i|\Dn)=n_1+n_0\pi_0$, 
which is $o_p(n)$ if $\pi_0\rightarrow0$. The average sample size reduction is $n_0(1-\pi_0)$ which is at the same order of $n$ if $\pi_0\nrightarrow1$, and $n_0(1-\pi_0)/n\rightarrow1$ if $\pi_0\rightarrow0$.

Note that the under-sampled data taken according to $\delta_i$ in~\eqref{eq:16} is a biased sample, so we need to maximize a weighted objective function to obtain an asymptotically unbiased estimator. Alternatively, we can maximize an unweighted objective function and then correct the bias for the resulting estimator in logistic regression.

\subsection{Under-sampled weighted estimator}
The sampling inclusion probability given the full data $\Dn$ for the $i$-th data point is
\begin{equation*}
  \pi_i=\Exp(\delta_i|\Dn)
  =y_i+(1-y_i)\pi_0=\pi_0+(1-\pi_0)y_i.
\end{equation*}
The under-sampled weighted estimator, $\htheta\sw$, is the maximizer of
\begin{align}\label{eq:8}
  \ell\sw(\btheta)=\sumn\frac{\delta_i}{\pi_i}
  \big\{y_i\z_i\tp\btheta-\log(1+e^{\z_i\tp\btheta})\big\}.
\end{align}

We present the asymptotic distribution of $\htheta\sw$ in the following theorem. 
\begin{theorem}\label{thm:2}
  If $\Exp(e^{t\|\x\|})<\infty$ for any $t>0$, $\Exp\big(e^{\btheta_t\tp\x}\z\z\tp\big)$ is a positive-definite matrix, and
  $c_n=e^{\alpha_t}/\pi_0\rightarrow c$ for a constant $c\in[0,\infty)$, then under the conditions in (\ref{eq:24}) and (\ref{eq:2}), as $n\rightarrow\infty$,
  \begin{equation}
    \sqrt{n_1}(\htheta\sw-\btheta_t)
    \longrightarrow \Nor(\0,\ \V\sw),
  \end{equation}
  in distribution, where 
  \begin{align}
    \V\sw&=\Exp(e^{\bbeta_t\tp\x})\M_{f}^{-1}\M\sw\M_{f}^{-1},
    \quad\text{and}\\
    \M\sw&=\Exp\big\{e^{\bbeta_t\tp\x}(1+ce^{\bbeta_t\tp\x})\z\z\tp\big\}.
\end{align}
\end{theorem}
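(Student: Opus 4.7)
The plan is to follow the standard M-estimator program: Taylor expand the weighted score $\dot\ell\sw$ around $\btheta_t$, establish a central limit theorem for the score at the truth, establish a law of large numbers for the weighted Hessian, and combine via Slutsky's theorem. Throughout, the rare-events scaling from~\eqref{eq:2} will be used to convert $n$-normalisations into $n_1$-normalisations via $n_1/n=e^{\alpha_t}\Exp(e^{\bbeta_t\tp\x})\{1+\op\}$, and the hypothesis $c_n=e^{\alpha_t}/\pi_0\to c<\infty$ will be what keeps the relevant ratios bounded.

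\textbf{Hessian.} The weighted Hessian is $-\ddot\ell\sw(\btheta)=\sumn(\delta_i/\pi_i)\,p_i(\alpha,\bbeta)\{1-p_i(\alpha,\bbeta)\}\z_i\z_i\tp$. Conditioning on $(\x_i,y_i)$, one has $\Exp(\delta_i/\pi_i)=1$, so the mean of the weighted Hessian coincides with the mean of the unweighted one. Using $p_i(\alpha_t,\bbeta_t)\{1-p_i(\alpha_t,\bbeta_t)\}=e^{\alpha_t+\bbeta_t\tp\x_i}\{1+o(1)\}$ (valid off a negligible event, controlled by $\Exp(e^{t\|\x\|})<\infty$), one obtains $\Exp\{-\ddot\ell\sw(\btheta_t)\}=ne^{\alpha_t}\M_f\{1+o(1)\}$, so division by $n_1$ yields the probability limit $\M_f/\Exp(e^{\bbeta_t\tp\x})$. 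Concentration around this mean is verified by computing the conditional variance, whose leading contribution is of order $ne^{\alpha_t}c_n$ and hence stays bounded in the $n_1$-scaling.

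\textbf{Score.} Split $\dot\ell\sw(\btheta_t)=\sumn(\delta_i/\pi_i)(y_i-p_i)\z_i$ into the case contribution ($y_i=1$, weight $1$) and the sampled-control contribution ($y_i=0$, weight $\delta_i/\pi_0$). Conditioning on $\x_i$ and using that $\delta_i\mid y_i=0$ is Bernoulli$(\pi_0)$, the per-observation variance equals $\{p_i(1-p_i)^2+(1-p_i)p_i^2/\pi_0\}\z_i\z_i\tp$. In the rare-events regime this reduces, in expectation over $\x$, to $e^{\alpha_t}\Exp\{e^{\bbeta_t\tp\x}(1+c_ne^{\bbeta_t\tp\x})\z\z\tp\}\{1+o(1)\}$; summing over $i$ and dividing by $n_1$ gives $\M\sw/\Exp(e^{\bbeta_t\tp\x})$. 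The CLT then follows from Lindeberg--Feller applied to the triangular array $\{(\delta_i/\pi_i)(y_i-p_i)\z_i/\sqrt{n_1}\}_{i=1}^{n}$, with the Lindeberg condition verified using $\Exp(e^{t\|\x\|})<\infty$ to bound higher moments of $e^{\bbeta_t\tp\x}\|\z\|^k$.

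\textbf{Combining and main obstacle.} Consistency of $\htheta\sw$ follows from concavity of $\ell\sw$ together with positive definiteness of the normalised Hessian on a neighbourhood of $\btheta_t$. A Taylor expansion of the score then yields
\begin{equation*}
\sqrt{n_1}(\htheta\sw-\btheta_t)=-\{\ddot\ell\sw(\btheta_t)/n_1\}^{-1}\{\dot\ell\sw(\btheta_t)/\sqrt{n_1}\}+\op,
\end{equation*}
where the remainder is controlled by uniform bounds on the third derivative (again using the MGF assumption), and Slutsky delivers the sandwich $\V\sw=\Exp(e^{\bbeta_t\tp\x})\M_f^{-1}\M\sw\M_f^{-1}$. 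The main obstacle is the interplay between the diverging inverse weights $1/\pi_0$ and the vanishing event probabilities $p_i\to 0$: both the Lindeberg condition for the score and the concentration of the Hessian hinge on moments of the form $\delta_i p_i^k/\pi_0^k$, and it is precisely the hypothesis $c_n=e^{\alpha_t}/\pi_0\to c<\infty$ that keeps these quantities finite in the limit, simultaneously producing the extra $ce^{2\bbeta_t\tp\x}$ term inside $\M\sw$.
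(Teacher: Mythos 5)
Your proposal is correct and follows essentially the same route as the paper: a Taylor expansion of the weighted objective, a Lindeberg--Feller CLT for the triangular-array score with exactly the per-observation variance $\{p_i(1-p_i)^2+(1-p_i)p_i^2/\pi_0\}\z_i\z_i\tp$ that produces the $ce^{2\bbeta_t\tp\x}$ term in $\M\sw$, Chebyshev concentration for the weighted Hessian using $\Exp(\delta_i/\pi_i\mid\Dn)=1$ and $n\pi_0\to\infty$, and Slutsky to assemble the sandwich. The only cosmetic differences are that the paper normalises by $a_n=\sqrt{ne^{\alpha_t}}$ and converts to $\sqrt{n_1}$ at the end (which is where the factor $\Exp(e^{\bbeta_t\tp\x})$ in $\V\sw$ comes from), and it invokes the Hjort--Pollard convexity corollary in place of your consistency-plus-expansion step; these are equivalent here.
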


\begin{remark}\normalfont
  If $\Exp(e^{t\|\x\|})<\infty$ for any $t>0$, then from (\ref{eq:2}) and the dominated convergence theorem, we know that $n_1=ne^{\alpha_t}\Exp(e^{\bbeta_t\tp\x})\{1+\op\}$. Thus 
  \begin{equation*}
    c_n\Exp(e^{\bbeta_t\tp\x})=\frac{n_1}{n\pi_0}\{1+\op\}
    =\frac{n_1}{n_0\pi_0}\{1+\op\}.
  \end{equation*}
Since $n_0\pi_0$ is the average number of the controls in the under-sampled data, $c\Exp(e^{\bbeta_t\tp\x})$ can be interpreted as the asymptotic ratio of the number of cases to the number of controls in the under-sampled data. Therefore, since $\Exp(e^{\bbeta_t\tp\x})>0$ is a fixed constant, the value of $c$ has the following intuitive %
interpretations.
  \begin{itemize}
  \item $c=0$: take much more controls than cases;
  \item $0<c<\infty$: the number of controls to take is at the same order of the number of cases;
  \item $c=\infty$: take much fewer controls than cases.
  \end{itemize}
Theorem~\ref{thm:2} requires that $0\le c<\infty$. This means that the number of controls to take should not be significantly smaller than the number of cases, which is a very reasonable assumption. 
\end{remark}

\begin{remark}\normalfont
  Theorem~\ref{thm:2} shows that as long as $\pi_0$ does not make the number of controls in the under-sampled data much smaller than the number of cases $n_1$, then the under-sampled estimator $\htheta\sw$ preserves the convergence rate of the full-data estimator. Furthermore, if $c=0$ then $\M\sw=\M_{f}$, which implies that $\V\sw=\V_f$. This means that if one takes much more controls than cases, then asymptotically there is no estimation efficiency loss at all. Here, the number of controls to take can still be significantly smaller than $n_0$ so that the computational burden is significantly reduced. If $c>0$, since $\M\sw>\M_{f}$, we know that $\V\sw>\V_f$, in the Loewner order\footnote{For two Hermitian matrices $\A_1$ and $\A_2$ of the same dimension, $\A_1\ge\A_2$ if $\A_1-\A_2$ is positive semi-definite and $\A_1>\A_2$ if $\A_1-\A_2$ is positive definite.}. Thus reducing the number of controls to the same order of the number of cases may reduce  %
the estimation efficiency, %
although the convergence rate is the same as that of the full-data estimator.
\end{remark}

\subsection{Under-sampled unweighted estimator with bias correction}
Based on the control under-sampled data, if we obtain an estimator from an unweighted objective function, say
\begin{align*}
  \ttheta\su&=\arg\max_{\btheta}\ \ell\su(\btheta)%
    =\arg\max_{\btheta}\sumn\delta_i
  \big[y_i\z_i\tp\btheta-\log\{1+e^{\z_i\tp\btheta}\}\big],
\end{align*}
then in %
$\ttheta\su=(\hat\alpha\su,\hbeta\su\tp)\tp$, the intercept estimator $\hat\alpha\su$ is asymptotically biased while the slope estimator $\hbeta\su$ is still asymptotically unbiased. We correct the bias of $\hat\alpha\su$ using $\log(\pi_0)$, and define the under-sampled {\bf u}nweighted estimator with {\bf b}ias {\bf c}orrection $\htheta\sbc$ as
\begin{equation}
  \htheta\sbc=\ttheta\su+\b,
\end{equation}
where
\begin{equation}
  \b=\{\log(\pi_0), 0, ..., 0\}\tp.
\end{equation}
The following theorem gives the asymptotic distribution of $\htheta\sbc$. 
\begin{theorem}\label{thm:3}
  If $\Exp(e^{t\|\x\|})<\infty$ for any $t>0$, $\Exp\big(e^{\btheta_t\tp\x}\z\z\tp\big)$ is a positive-definite matrix, and
  $e^{\alpha_t}/\pi_0\rightarrow c$ for a constant $c\in[0,\infty)$, then under the conditions in (\ref{eq:24}) and (\ref{eq:2}), as $n\rightarrow\infty$,
  \begin{equation}
    \sqrt{n_1}(\htheta\sbc-\btheta_t)
    \longrightarrow \Nor(\0,\ \V\sbc),
  \end{equation}
  in distribution, where
  \begin{align}
    \V\sbc&=\Exp(e^{\bbeta_t\tp\x})(\M\sbc)^{-1}, \quad\text{and}\\
    \M\sbc&=\Exp\bigg(\frac{e^{\bbeta_t\tp\x}}{1+ce^{\bbeta_t\tp\x}}
     \z\z\tp\bigg).
\end{align}
\end{theorem}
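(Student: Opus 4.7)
The plan is to recognize $\ttheta\su$ as essentially the MLE of a shifted logistic model on the retained subsample. A direct Bayes computation shows that, conditional on $\x_i$ and $\delta_i=1$,
\begin{equation*}
\Pr(y_i=1\mid\x_i,\delta_i=1)
=\frac{p_i(\btheta_t)}{\pi_0+(1-\pi_0)p_i(\btheta_t)}
=\frac{e^{\alpha_t-\log\pi_0+\x_i\tp\bbeta_t}}{1+e^{\alpha_t-\log\pi_0+\x_i\tp\bbeta_t}}
=p_i(\btheta_t^*),
\end{equation*}
where $\btheta_t^*\equiv\btheta_t-\b=(\alpha_t-\log\pi_0,\bbeta_t\tp)\tp$. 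Thus $\ell\su$ is (up to an ancillary factor) the log-likelihood of the retained observations under a logistic model with parameter $\btheta_t^*$, and the bias correction $\htheta\sbc=\ttheta\su+\b$ simply realigns the target: $\htheta\sbc-\btheta_t=\ttheta\su-\btheta_t^*$. The assumption $e^{\alpha_t}/\pi_0\to c$ keeps the shifted intercept $\alpha_t^*=\alpha_t-\log\pi_0$ bounded, converging to $\log c$ when $c>0$.

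Given this reduction, the standard MLE argument applies. After first establishing consistency $\ttheta\su-\btheta_t^*=\op$ via concavity of $\ell\su$, a Taylor expansion of the score about $\btheta_t^*$ yields
\begin{equation*}
\sqrt{n_1}(\ttheta\su-\btheta_t^*)=\{-\ddot\ell\su(\tilde{\btheta})/n_1\}^{-1}\,\dot\ell\su(\btheta_t^*)/\sqrt{n_1}
\end{equation*}
for some $\tilde{\btheta}$ between $\ttheta\su$ and $\btheta_t^*$. The two required limits follow from direct computation. For the Hessian, $\Exp(\delta_i\mid\x_i)=\pi_0+(1-\pi_0)p_i(\btheta_t)$ combined with the key identity $\pi_0 e^{\alpha_t^*}=e^{\alpha_t}$ collapses the per-observation conditional expectation to
\begin{equation*}
\frac{e^{\alpha_t+\x\tp\bbeta_t}}{(1+e^{\alpha_t+\x\tp\bbeta_t})(1+e^{\alpha_t^*+\x\tp\bbeta_t})}\,\z\z\tp;
\end{equation*}
dividing by $n_1\sim ne^{\alpha_t}\Exp(e^{\bbeta_t\tp\x})$, the $e^{\alpha_t}$ factors cancel and dominated convergence (justified by $\Exp(e^{t\|\x\|})<\infty$ and $\alpha_t^*\to\log c$) gives $-\ddot\ell\su(\btheta_t^*)/n_1\to\M\sbc/\Exp(e^{\bbeta_t\tp\x})$ in probability. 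For the score, the identity above makes the summands conditionally mean zero at $\btheta_t^*$, and a parallel calculation of the conditional second moment $p_i(\btheta_t)\{1-p_i(\btheta_t^*)\}^2+\{1-p_i(\btheta_t)\}\pi_0 p_i(\btheta_t^*)^2$ simplifies, after the same cancellation, to $e^{\alpha_t+\x\tp\bbeta_t}/(1+e^{\alpha_t^*+\x\tp\bbeta_t})$, producing the same asymptotic covariance $\M\sbc/\Exp(e^{\bbeta_t\tp\x})$. A Lindeberg--Feller CLT on the triangular array then delivers $\dot\ell\su(\btheta_t^*)/\sqrt{n_1}\to\Nor(\0,\M\sbc/\Exp(e^{\bbeta_t\tp\x}))$, and substituting back gives $\sqrt{n_1}(\htheta\sbc-\btheta_t)\to\Nor(\0,\V\sbc)$.

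The main obstacle is the consistency of $\ttheta\su$ for the drifting target $\btheta_t^*$ together with the uniform Hessian control needed to replace $\tilde{\btheta}$ by $\btheta_t^*$ in the limit. The cleanest remedy is to reparameterize by $\btheta\mapsto\btheta-\b$ so that the target is asymptotically stationary at $(\log c,\bbeta_t\tp)\tp$, after which the argument mirrors the proof of Theorem~\ref{thm:1} but in the non-rare regime governed by $c$. The moment bound $\Exp(e^{t\|\x\|})<\infty$ is indispensable for controlling the perturbed weights $p(1-p)$ on a neighborhood of the target and for the dominated-convergence steps above.
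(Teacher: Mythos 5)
Your proposal is correct and follows essentially the same route as the paper: the shift you propose is exactly the paper's device $\Upsilon_{bc}(\btheta)=\ell\su(\btheta-\b)$, and your conditional mean, variance, and Hessian identities at $\btheta_t^*=\btheta_t-\b$ are precisely the computations the paper verifies algebraically (your Bayes/case--control derivation is a nicer explanation of why the correction is $\log\pi_0$ on the intercept alone). The one obstacle you flag---consistency for the drifting target and uniform Hessian control---is handled in the paper by applying the convexity-based Basic Corollary of Hjort and Pollard to the localized objective $\gamma_{bc}(\u)=\Upsilon_{bc}(\btheta_t+a_n^{-1}\u)-\Upsilon_{bc}(\btheta_t)$, which requires only the two pointwise limits you establish and no separate consistency step, and which also covers $c=0$, where your ``asymptotically stationary target at $(\log c,\bbeta_t\tp)\tp$'' framing would still leave a drifting intercept.
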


\begin{remark}\normalfont
  Similarly to the case of under-sampled weighted estimator, Theorem~\ref{thm:3} shows that the estimator $\htheta\sbc$ preserves the same convergence rate of the full-data estimator if $c<\infty$. Furthermore, if $c>0$ then $\V\sbc>\V_f$; if $c=0$, then $\V\sbc=\V_f$.
\end{remark}

The following proposition is useful to compare the asymptotic variances of the weighted and the unweighted estimators. 
\begin{proposition}\label{prop1}
  Let $\v$ be a random vector and $h$ be a positive scalar random variable. Assume that
$\Exp(\v\v\tp)$, $\Exp(h\v\v\tp)$, and $\Exp(h^{-1}\v\v\tp)$ are all finite and positive-definite matrices. The following inequality holds in the Loewner order.
  \begin{equation*}
    \big\{\Exp(h^{-1}\v\v\tp)\big\}^{-1}\le
    \big\{\Exp(\v\v\tp)\big\}^{-1}\Exp(h\v\v\tp)
    \big\{\Exp(\v\v\tp)\big\}^{-1}.
  \end{equation*}
\end{proposition}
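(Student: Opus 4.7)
The plan is to recognize the stated inequality as the matrix Cauchy--Schwarz inequality in disguise. Abbreviate $A = \Exp(\v\v\tp)$, $B = \Exp(h\v\v\tp)$, and $C = \Exp(h^{-1}\v\v\tp)$; all three matrices are positive-definite by assumption. Since inversion reverses the Loewner order on the cone of positive-definite matrices, the target inequality $C^{-1} \le A^{-1} B A^{-1}$ is equivalent to the sandwiched form $A B^{-1} A \le C$. I would therefore reduce the problem to establishing this latter inequality.

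To prove $A B^{-1} A \le C$, the main step is to introduce, for an arbitrary matrix $M$ of the appropriate dimension, the auxiliary random vector $W = h^{-1/2}\v - M h^{1/2}\v$ and exploit the trivial fact $\Exp(W W\tp) \ge \0$. Expanding the outer product and taking expectations (using that $h > 0$ so $h^{\pm 1/2}$ are well defined) yields
$$C - M A - A M\tp + M B M\tp \ge \0.$$
Completing the square in the matrix variable $M$, or equivalently choosing the minimizer $M = A B^{-1}$, collapses the middle terms and leaves $C - A B^{-1} A \ge \0$, which is the reduced form.

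An essentially equivalent route, which I might prefer for its cleanliness, is to observe that the block matrix obtained by taking the expectation of the outer product of $(h^{-1/2}\v\tp,\ h^{1/2}\v\tp)\tp$ with itself is automatically positive semidefinite, with diagonal blocks $C$ and $B$ and off-diagonal blocks $A$. The Schur complement of $B$ in this block matrix is then exactly $C - A B^{-1} A$, so it must be positive semidefinite; inverting both sides of $A B^{-1} A \le C$ yields the claim.

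There is no substantive obstacle here beyond identifying the correct auxiliary vector $W$ (or equivalently the correct block matrix). The only technical care required is to confirm that the positive-definiteness of $A$, $B$, and $C$ makes all matrix inverses well defined and justifies the reversal of the Loewner order under inversion, both of which are immediate from the hypotheses.
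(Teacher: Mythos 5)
Your proof is correct and rests on the same key idea as the paper's: both form an auxiliary random vector as a linear combination of $h^{-1/2}\v$ and $h^{1/2}\v$ and use the positive semidefiniteness of its expected outer product (the matrix Cauchy--Schwarz trick). The only difference is cosmetic: the paper chooses the coefficients $\{\Exp(h^{-1}\v\v\tp)\}^{-1}$ and $\{\Exp(\v\v\tp)\}^{-1}$ so that the expansion yields the stated inequality in one step, whereas your choice $M=\Exp(\v\v\tp)\{\Exp(h\v\v\tp)\}^{-1}$ (equivalently, the Schur-complement view of the block matrix) first gives the dual form $\Exp(\v\v\tp)\{\Exp(h\v\v\tp)\}^{-1}\Exp(\v\v\tp)\le\Exp(h^{-1}\v\v\tp)$ and then invokes the standard fact that matrix inversion reverses the Loewner order on positive-definite matrices.
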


\begin{remark}\normalfont
  If we let $\v=e^{\bbeta_t\tp\x/2}\z$ and $h=1+ce^{\bbeta_t\tp\x}$ in Proposition~\ref{prop1}, then we know that $\V\sbc\le\V\sw$ in the Loewner order. This indicates that with the same control under-sampled data, the unweighted estimator with bias correction, $\htheta\sbc$, has a higher estimation efficiency than the weighted estimator, $\htheta\sw$. 
\end{remark}

\section{Efficiency loss due to over-sampling}
\label{sec:efficiency-loss-due}
Another common practice to analyze rare events data is to use all the controls and over-sample the cases. To investigate the effect of this approach, let $\tau_i$ denote the number of times that a data point is used, and define 
\begin{equation}\label{eq:19}
  \tau_i%
  =y_iv_i+1,\quad i=1, ...,n,
\end{equation}
where $v_i\sim \mathbb{POI}(\lambda_n)$, $i=1, ..., n$, are i.i.d. Poisson random variables with parameter $\lambda_n$. 
For this over-sampling plan, a data point with $y_0=0$ will be used only one time, while a data point with $y_i=1$ will be on average used in the over-sampled data for $\Exp(\tau_i|\Dn,y_i=1)=1+\lambda_n$ times. Here, $\lambda_n$ can be interpreted as the average over-sampling rate for cases. 

Again, the case over-sampled data according to \eqref{eq:19} is a biased sample, and we need to use a weighted objective function or to correct the bias of the estimator form an unweighted objective function.

\subsection{Over-sampled weighted estimator}
Let $w_i=\Exp(\tau_i|\Dn)=1+\lambda_ny_i$.  The case over-sampled {\bf w}eighted estimator, $\htheta\ow$, is the maximizer of
\begin{equation}\label{eq:11}
  \ell\ow(\btheta)=\sumn\frac{\tau_i}{w_i}
  \big\{y_i\z_i\tp\btheta-\log(1+e^{\z_i\tp\btheta})\big\}.
\end{equation}

The following theorem gives the asymptotic distribution of $\htheta\ow$.
\begin{theorem}\label{thm:4}
  If $\Exp(e^{t\|\x\|})<\infty$ for any $t>0$, $\Exp\big(e^{\btheta_t\tp\x}\z\z\tp\big)$ is positive-definite, and $\lambda_n\rightarrow\lambda\ge0$,
  then under the conditions in (\ref{eq:24}) and (\ref{eq:2}), as $n\rightarrow\infty$,
  \begin{equation}
    \sqrt{n_1}(\htheta\ow-\btheta_t)
    \longrightarrow \Nor(\0, \V\ow),
  \end{equation}
  in distribution, where
  \begin{equation}\label{eq:4}
    \V\ow=\frac{(1+\lambda)^2+\lambda}{(1+\lambda)^2}
    \Exp(e^{\bbeta_t\tp\x})\M_{f}^{-1}.
  \end{equation}
\end{theorem}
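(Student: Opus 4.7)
The plan is to follow the standard M-estimation route: establish consistency of $\htheta\ow$, do a Taylor expansion of the score equation around $\btheta_t$, and then control the normalized Hessian and score separately, re-using as much as possible the arguments behind Theorem~\ref{thm:1}. The key observation is that the only new randomness relative to the full-data likelihood comes from the Poisson multiplicities $v_i$, and these enter the estimating equation only through the cases.

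First I would establish consistency by a standard convex-analytic argument: $\ell\ow(\btheta)$ is concave, and its gradient at $\btheta_t$ has mean zero once we condition on the data and average over the $v_i$'s. With the moment assumption $\Exp(e^{t\|\x\|})<\infty$ and $\lambda_n\to\lambda<\infty$ the weights $\tau_i/w_i$ are bounded in probability uniformly in $i$, so after appropriate rescaling in $\alpha$ (reflecting that $\alpha_t\to-\infty$) the usual uniform convergence of the log-likelihood ratio can be reproduced.

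Next, from $\dot\ell\ow(\htheta\ow)=\0$ I would write
\begin{equation*}
    \sqrt{n_1}(\htheta\ow-\btheta_t)
    =\bigl\{-n_1^{-1}\ddot\ell\ow(\btheta^*)\bigr\}^{-1}
      n_1^{-1/2}\dot\ell\ow(\btheta_t),
\end{equation*}
for some $\btheta^*$ between $\htheta\ow$ and $\btheta_t$. Since $\Exp(\tau_i/w_i\mid\Dn)=1$, the Hessian $-\ddot\ell\ow(\btheta_t)=\sum_i(\tau_i/w_i)p_i(1-p_i)\z_i\z_i\tp$ has the same conditional mean as the full-data Hessian, and a variance bound using $\Var(\tau_i/w_i\mid\Dn)=\lambda_n y_i/(1+\lambda_n)^2$ together with the exponential moment condition gives $n_1^{-1}\ddot\ell\ow(\btheta_t)\to-\M_f/\Exp(e^{\bbeta_t\tp\x})$ in probability, exactly as in Theorem~\ref{thm:1}. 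A local Lipschitz argument on $\ddot\ell\ow$ near $\btheta_t$ then transfers this to $\btheta^*$.

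The crux is the score. I would decompose
\begin{equation*}
    \dot\ell\ow(\btheta_t)
    =\dot\ell(\btheta_t)+\frac{1}{1+\lambda_n}\sum_{i=1}^n y_i(v_i-\lambda_n)(1-p_i)\z_i
    =:S_n^{(1)}+S_n^{(2)}.
\end{equation*}
The term $S_n^{(1)}$ is the full-data score, for which the proof of Theorem~\ref{thm:1} already supplies $n_1^{-1/2}S_n^{(1)}\Rightarrow\Nor(\0,\M_f/\Exp(e^{\bbeta_t\tp\x}))$. For $S_n^{(2)}$, conditioning on $\Dn$ removes the sum to a collection of independent centered Poisson increments with conditional covariance $\{\lambda_n/(1+\lambda_n)^2\}\sum_i y_i(1-p_i)^2\z_i\z_i\tp$; applying Lindeberg-Feller conditionally and then using $\sum_i y_i(1-p_i)^2\z_i\z_i\tp=n_1\M_f/\Exp(e^{\bbeta_t\tp\x})\{1+\op\}$ (via the same law-of-large-numbers argument that drives Theorem~\ref{thm:1}) yields $n_1^{-1/2}S_n^{(2)}\mid\Dn\Rightarrow\Nor(\0,\{\lambda/(1+\lambda)^2\}\M_f/\Exp(e^{\bbeta_t\tp\x}))$. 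Because $S_n^{(2)}$ is conditionally independent of $S_n^{(1)}$ given $\Dn$, the limit is unconditionally joint normal with the variances adding, producing the factor $1+\lambda/(1+\lambda)^2=\{(1+\lambda)^2+\lambda\}/(1+\lambda)^2$. Sandwiching this with $\M_f^{-1}\Exp(e^{\bbeta_t\tp\x})$ on both sides gives $\V\ow$ in \eqref{eq:4}.

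The main obstacle I anticipate is the clean handling of the two superimposed randomness sources—i.i.d.\ data and conditionally independent Poisson multiplicities—while simultaneously $\alpha_t\to-\infty$. The familiar sandwich calculation does not instantly apply because the ``extra variance'' term lives on the cases, whose count $n_1$ is itself random and of order $ne^{\alpha_t}$; the cleanest route is conditional CLT for $S_n^{(2)}$ combined with the rare-event normalization identity $n_1=ne^{\alpha_t}\Exp(e^{\bbeta_t\tp\x})\{1+\op\}$ noted after Theorem~\ref{thm:2}, which converts the empirical sum $\sum_i y_i(1-p_i)^2\z_i\z_i\tp$ into the deterministic limit on the $n_1$ scale.
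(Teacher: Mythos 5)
Your proposal is correct and arrives at the right limit, but it takes a genuinely different route from the paper on both structural steps. For the asymptotic linearization, the paper works with the localized concave process $\gamma\ow(\u)=\ell\ow(\btheta_t+a_n^{-1}\u)-\ell\ow(\btheta_t)$ with $a_n=\sqrt{ne^{\alpha_t}}$ and invokes the Hjort--Pollard basic corollary for convex objectives, which delivers the representation $a_n(\htheta\ow-\btheta_t)=\M_f^{-1}a_n^{-1}\dot\ell\ow(\btheta_t)+\op$ without a separate consistency proof; your mean-value expansion of the score equation requires you to first establish consistency and then a local Lipschitz control of the Hessian, which is more work but equally standard. For the CLT, the paper treats $\eta_{owi}=(\tau_i/w_i)\{y_i-p_i(\btheta_t)\}\z_i$ as a single i.i.d.\ triangular array (marginalizing jointly over the data and the Poisson multiplicities), computes $\Var(\eta_{owi}\mid\z_i)$ directly, and applies Lindeberg--Feller once; you instead split the score as $S_n^{(1)}+S_n^{(2)}$ with $S_n^{(2)}=(1+\lambda_n)^{-1}\sum_i y_i(v_i-\lambda_n)(1-p_i)\z_i$, apply the full-data CLT of Theorem~\ref{thm:1} to $S_n^{(1)}$ and a conditional CLT given $\Dn$ to $S_n^{(2)}$. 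Your decomposition is algebraically exact, the cross term vanishes since $\Exp(v_i-\lambda_n)=0$, and the two conditional variances add to reproduce $\{(1+\lambda)^2+\lambda\}/(1+\lambda)^2$ exactly as in the paper's single computation, so the two routes are equivalent in content. What your route buys is interpretive transparency: it isolates the source of the efficiency loss as pure Poisson over-dispersion riding on the cases, and it recycles Theorem~\ref{thm:1} wholesale. What it costs is the need to justify the joint convergence carefully: since $S_n^{(1)}$ is $\Dn$-measurable, the correct statement is not ``conditional independence'' but that the conditional law of $n_1^{-1/2}S_n^{(2)}$ given $\Dn$ converges in probability to a fixed Gaussian, whence $S_n^{(2)}$ converges stably and is asymptotically independent of every $\Dn$-measurable sequence; this is routine via conditional characteristic functions and dominated convergence, but it should be stated. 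The paper's single unconditional Lindeberg--Feller application sidesteps this issue entirely.
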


\begin{remark}
  Note that in (\ref{eq:4}), $\frac{(1+\lambda)^2+\lambda}{(1+\lambda)^2}\ge1$ and the equality holds only if $\lambda=0$ or $\lambda=\infty$. Thus, $\V\ow\ge\V_f$, meaning that over-sampling the cases may result in estimation efficiency loss unless the number of over-sampled cases is small enough to be negligible ($\lambda=0$) or it is very large ($\lambda=\infty$). Considering that over-sampling incurs additional computational cost with potential estimation efficiency loss, this procedure is not recommended if the primary goal is parameter estimation. 
\end{remark}

\subsection{Over-sampled unweighted estimator with bias correction}
For completeness, we derive the asymptotic distribution of the over-sampled unweighted estimator with {\bf b}ias {\bf c}orrection, $\htheta\obc$, defined as $\htheta\obc=\ttheta\ou-\b_o$, where
\begin{align}
  \ttheta\ou&=\arg\max_{\btheta}\ \ell\ou(\btheta)%
    =\arg\max_{\btheta}\sumn
  \tau_i\big[y_i\z_i\tp\btheta-\log\{1+e^{\z_i\tp\btheta}\}\big],
\end{align}
and %
\begin{equation}
  \b_o=(b_{o0}, 0, ..., 0)\tp=\{\log(1+\lambda_n), 0, ..., 0\}\tp.
\end{equation}
The following theorem is about the asymptotic distribution of $\htheta\obc$. 
\begin{theorem}\label{thm:5}
  If $\Exp(e^{t\|\x\|})<\infty$ for any $t>0$, $\Exp\big(e^{\btheta_t\tp\x}\z\z\tp\big)$ is positive-definite, $\lambda_n\rightarrow\lambda\ge0$, and $\lambda_ne^{\alpha_t}\rightarrow \cm$ for a constant $\cm\in[0,\infty)$, then under the conditions in (\ref{eq:24}) and (\ref{eq:2}), as $n\rightarrow\infty$,
  \begin{equation}
    \sqrt{n_1}(\htheta\obc-\btheta_t)
    \longrightarrow \Nor(\0,\ \V\obc),
  \end{equation}
  in distribution, where
  \begin{align*}
    &\V\obc=\frac{(1+\lambda)^2+\lambda}{(1+\lambda)^2}
      \Exp\big(e^{\bbeta_t\tp\x}\big)
      \M_{obc2}^{-1}\M_{obc1}\M_{obc2}^{-1},\\
    &\M_{obc1}=\Exp\bigg\{\frac{e^{\bbeta_t\tp\x}}
    {(1+\cm e^{\bbeta_t\tp\x})^2}\z\z\tp\bigg\}, \quad\text{ and }\quad\\
  &\M_{obc2}=\Exp\bigg(\frac{e^{\bbeta_t\tp\x}}
    {1+\cm e^{\bbeta_t\tp\x}}\z\z\tp\bigg).
\end{align*}
\end{theorem}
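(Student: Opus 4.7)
My plan is to reduce the problem to a standard $M$-estimator Taylor expansion after observing that the over-sampled unweighted objective $\ell\ou$ is, up to the deterministic intercept shift $\b_o$, a logistic log-likelihood whose population score vanishes at $\btheta_t^\ast:=\btheta_t+\b_o$. Because $\htheta\obc-\btheta_t=\ttheta\ou-\btheta_t^\ast$, it suffices to establish asymptotic normality of $\sqrt{n_1}(\ttheta\ou-\btheta_t^\ast)$. The legitimizing calculation is the algebraic identity $(1+\lambda_n)p(\x)(1-p^\ast(\x))=(1-p(\x))p^\ast(\x)$, where $p^\ast(\x)=p(\btheta_t^\ast)$ evaluated at $\x$; this yields $\Exp[\tau_i(y_i-p_i^\ast)\z_i]=0$, confirming that $\btheta_t^\ast$ is the population root of the over-sampled unweighted score.

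I would then Taylor-expand the score equation about $\btheta_t^\ast$, obtaining
\[
\ttheta\ou-\btheta_t^\ast=-\{\ddot\ell\ou(\tilde\btheta)\}^{-1}\dot\ell\ou(\btheta_t^\ast)
\]
for some $\tilde\btheta$ between $\ttheta\ou$ and $\btheta_t^\ast$. Consistency of $\ttheta\ou$ for $\btheta_t^\ast$ follows from convexity of $-\ell\ou$ together with the identity above, paralleling the arguments behind Theorems~\ref{thm:1}--\ref{thm:4}. Setting $a_n:=(1+\lambda_n)e^{\alpha_t}\to\cm$, observe that $p_i^\ast(1-p_i^\ast)=a_ne^{\bbeta_t\tp\x_i}/(1+a_ne^{\bbeta_t\tp\x_i})^2$. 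Using $\Exp[\tau_i\mid\Dn]=1+\lambda_n y_i$ and combining the $y_i=0$ and $y_i=1$ contributions via the identity $(1+\lambda_n p(\x))p^\ast(\x)(1-p^\ast(\x))=(1+\lambda_n)e^{\alpha_t}e^{\bbeta_t\tp\x}/\{(1+e^{\alpha_t+\bbeta_t\tp\x})(1+a_ne^{\bbeta_t\tp\x})\}$, and then dividing by the rare-event normalizer $n_1/n=e^{\alpha_t}\Exp(e^{\bbeta_t\tp\x})\{1+\op\}$ from (\ref{eq:2}), the Hessian limit should be $(1+\lambda)\{\Exp(e^{\bbeta_t\tp\x})\}^{-1}\M_{obc2}$ in probability.

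For the second moment of the score, $\Exp[\tau_i^2\mid y_i]=(1+\lambda_n y_i)^2+\lambda_n y_i$ together with the decomposition $\Exp[(y_i-p_i^\ast)^2\mid\x_i]=(1-p(\x_i))(p_i^\ast)^2+p(\x_i)(1-p_i^\ast)^2$ splits $\Var\{\dot\ell\ou(\btheta_t^\ast)\}$ into a $y_i=0$ piece that is $O(e^{2\alpha_t})$ (hence $\op$ of $n_1$) and a $y_i=1$ piece which, after the same $e^{\alpha_t}$ rescaling, converges to $\{(1+\lambda)^2+\lambda\}\{\Exp(e^{\bbeta_t\tp\x})\}^{-1}\M_{obc1}$. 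A Lindeberg--Feller CLT for the triangular array $\{\tau_i(y_i-p_i^\ast)\z_i/\sqrt{n_1}\}$, with the Lindeberg condition verified using the tail hypothesis $\Exp(e^{t\|\x\|})<\infty$ for all $t>0$ and the Poisson moments of $v_i$, then delivers the Gaussian limit of $\dot\ell\ou(\btheta_t^\ast)/\sqrt{n_1}$. Combining the two limits by Slutsky and algebraically simplifying yields $\V\obc$ exactly: the $(1+\lambda)$ factors from the Hessian inverse combine with $(1+\lambda)^2+\lambda$ from the score variance into the prefactor $\{(1+\lambda)^2+\lambda\}/(1+\lambda)^2$, while the $\Exp(e^{\bbeta_t\tp\x})$ factors cancel down to a single power.

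The main technical obstacle I anticipate is the Hessian analysis: the over-sampling multiplier $\tau_i$ has unbounded Poisson tails and interacts nontrivially with the rare-event scaling $e^{\alpha_t}\to 0$, so careful bookkeeping of the orders of $a_n$, $\lambda_n p(\x_i)$, and $e^{\alpha_t}$ is required to identify which terms survive the $n_1$-normalization. Replacing $-\ddot\ell\ou(\tilde\btheta)/n_1$ by $-\ddot\ell\ou(\btheta_t^\ast)/n_1$ requires a uniform-over-a-shrinking-neighborhood Lipschitz bound on the Hessian whose effective radius depends on $a_n$, and controlling the stochastic deviation of $\ddot\ell\ou(\btheta_t^\ast)$ from its mean requires moment bounds on $\tau_i^2\|\z_i\|^2$ that combine $\mathrm{Var}(\tau_i\mid y_i)=\lambda_n y_i$ with the exponential moment assumption on $\|\x\|$.
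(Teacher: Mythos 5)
Your proposal follows essentially the same route as the paper's proof. The two identities you isolate --- $(1+\lambda_n)p(1-p^\ast)=(1-p)p^\ast$ for centering the unweighted score at $\btheta_t+\b_o$, and $(1+\lambda_n p)\,p^\ast(1-p^\ast)=(1+\lambda_n)e^{\alpha_t}e^{\bbeta_t\tp\x}\big/\big[\{1+e^{\alpha_t+\bbeta_t\tp\x}\}\{1+(1+\lambda_n)e^{\alpha_t}e^{\bbeta_t\tp\x}\}\big]$ for the Hessian mean --- are exactly the ones the paper uses, as are the Lindeberg--Feller argument for the score, the Chebyshev/Lipschitz control of the Hessian remainder, and the final sandwich; the paper phrases the expansion via the Hjort--Pollard convexity corollary rather than a mean-value Taylor expansion of the score equation, but for this concave objective the two are interchangeable. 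The one caveat is your normalization: working with the unnormalized $\ell\ou$ and dividing by $n_1$, your stated intermediate limits $(1+\lambda)\{\Exp(e^{\bbeta_t\tp\x})\}^{-1}\M_{obc2}$ and $\{(1+\lambda)^2+\lambda\}\{\Exp(e^{\bbeta_t\tp\x})\}^{-1}\M_{obc1}$ are finite only when $\lambda<\infty$, and in that case $\lambda_n e^{\alpha_t}\rightarrow\cm$ forces $\cm=0$; the regime $\cm>0$, which the theorem and its remark explicitly cover, requires $\lambda_n\rightarrow\infty$, where both of your intermediate limits diverge even though their combination is fine. The paper sidesteps this by building the factor $(1+\lambda_n)^{-1}$ into the objective $\Upsilon_{oc}$ so that the rescaled score variance and Hessian each have honest finite limits; you should do the same, or state your two limits as $\{1+o(1)\}$ multiples of explicit $(1+\lambda_n)$-dependent quantities, so that the argument covers every case allowed by the hypotheses.
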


\begin{remark}\normalfont
 Unlike the case of under-sampled estimators, for over-sampled estimators, the unweighted estimator with bias correction $\htheta\obc$ has a lower estimation efficiency than the weighted estimator $\htheta\ow$. To see this, letting $h=(1+\cm e^{\bbeta_t\tp\x})^{-1}$ and $\v=e^{\bbeta_t\tp\x/2}(1+\cm e^{\bbeta_t\tp\x})^{-1/2}\z$ in Proposition~\ref{prop1}, we know that $\V\obc\ge\V\ow$, and the equality holds if $c_o=0$. Here, since $\lambda_ne^{\alpha_t}\Exp(e^{\bbeta_t\tp\x})=\frac{n_1\lambda_n}{n_0}\{1+\op\}$, we can intuitively interpret $\cm\Exp(e^{\bbeta_t\tp\x})$ as the ratio of the average times of over-sampled cases to the number of controls. If in addition $\lambda=0$, then $\V\obc=\V\ow=\V_f$; but in general, $\V\obc\ge\V\ow\ge\V_f$. 
\end{remark}

\begin{remark}\normalfont
  Compared with Theorem~\ref{thm:4} for $\htheta\sw$, Theorem~\ref{thm:5} for $\htheta\obc$ requires an extra condition that $\lambda_ne^{\alpha_t}\rightarrow \cm\in[0,\infty)$. In addition, $\V\obc\ge\V\ow$. Thus, if over-sampling has to be implemented, then we recommend using the weighted estimator $\htheta\ow$. 
\end{remark}

\section{Numerical experiments}
\label{sec:numer-demonstr}
\subsection{Full data estimator $\htheta$}
Consider model~\eqref{eq:1} with one covariate $x$ and $\btheta=(\alpha,\beta)\tp$. We set $\Pr(y=1)=0.02$, $0.004$, $0.0008$ and $0.00016$, and generate corresponding full data of sizes $n=10^3$, $10^4$, $10^5$ and $10^6$, respectively. As a result, the average numbers of cases ($y_i=1$) in the resulting data are $\Exp(n_1)=20$, $40$, $80$ and $160$. The above value configuration aims to mimic the scenario that $n\rightarrow\infty$, $\Pr(y=1)\rightarrow0$, and $\Exp(n_1)\rightarrow\infty$. The covariates $x_i$'s are generated from $\Nor(1,1)$ for cases ($y_i=1$) and from $\Nor(0,1)$ for controls ($y_i=0$). For the above setup, the true value of $\beta$ is fixed $\beta_t=1$, and the true values of $\alpha$ are $\alpha_t=-4.39$, $-6.02$, $-7.63$ and $-9.24$, respectively for the four different values of $n$. 
We repeat the simulation for $S=1,000$ times and calculate empirical MSEs as $\text{eMSE}(\hat\theta_j)=S^{-1}\sum_{s=1}^S
(\hat\theta_j^{(s)}-\theta_{tj})^2$, $j=0,1$, where $\hat\theta_0=\hat\alpha$, $\hat\theta_1=\hat\beta$, and $\hat\theta_j^{(s)}$ is the estimate from the $s$-th repetition.

Table~\ref{tab:1} presents empirical MSEs (eMSEs) multiplied by $\Exp(n_1)$ and $n$, respectively. We see that $\Exp(n_1)\times$eMSE$(\hat\theta_j)$ does not diverge as $n$ increases for both $\hat\alpha$ and $\hat\beta$. This confirms the conclusion in Theorem~\ref{thm:1} that $\htheta$ converges at a rate of $n_1^{-1/2}$ (It implies that $n_1\|\htheta-\btheta_t\|^2=\Op$). On the other hand, values of $n\times$eMSE$(\hat\theta_j)$ are large, and they increase fast as $n$ increases, indicating that $n\|\htheta-\btheta_t\|^2$ diverges to infinity. Table~\ref{tab:1} confirms that although the values of the full data sample sizes $n$ are very large, it is the values of $n_1$ that reflect the real amount of available information about regression parameters, and they are actually much smaller.

\begin{table}[htbp]
  \caption{Empirical MSE (eMSE) multiplied by $\Exp(n_1)$ and $n$. }
  \label{tab:1}
\vskip 0.15in
\begin{center}
\begin{tabular}{ccrrcrr}\hline
  $n$ & $\Exp(n_1)$ & \multicolumn{2}{c}{$\Exp(n_1)\times$eMSE$(\hat\theta_j)$} &  & \multicolumn{2}{c}{$n\times$eMSE$(\hat\theta_j)$}\\
  \cline{3-4}\cline{6-7}
 & & $\hat\alpha$ & $\hat\beta$ &  & $\hat\alpha$ & $\hat\beta$ \\
\hline
$10^3$ & 20 & 2.51 & 1.21 &  & 125.7 & 60.6 \\
$10^4$ & 40 & 2.06 & 1.09 &  & 515.5 & 271.9 \\
$10^5$ & 80 & 2.22 & 1.00 &  & 2774.4 & 1248.8 \\
$10^6$ & 160 & 2.16 & 1.08 &  & 13474.9 & 6731.6 \\
\hline
\end{tabular}
\end{center}
\vskip -0.1in
\end{table}

\subsection{Sampling-based estimators} 
Now we provide numerical results about under-sampled and over-sampled estimators. 
Consider model~\eqref{eq:1} with $n=10^5$, $x\sim \Nor(0,1)$, and $\btheta_t=(-6,1)\tp$, so that $\Pr(y=1)\approx0.004$. For under-sampling, consider $\pi_0=0.005$, $0.01$, $0.05$, $0.1$, $0.2$, $0.5$, $0.8$, and $1.0$; for over-sampling, consider $\lambda_n=0, 0.22, 0.49, 1.23, 3.48, 6.39, 11.18$ and $53.6$, which corresponds to $\log(1+\lambda_n)=0$, $0.2$, $0.4$, $0.8$, $1.5$, $2.0$, $2.5$ and $4.0$, respectively. We repeat the simulation for $S=1,000$ times and calculate empirical MSEs as
\begin{equation*}
  \text{eMSE}(\htheta_g)=\frac{1}{S}\sum_{s=1}^S
  \|\htheta_g^{(s)}-\btheta_t\|^2,
\end{equation*}
where $\htheta_g^{(s)}$ is the estimate from the $s$-th repetition for some estimator $\htheta_g$. We consider $\htheta_g=\hbeta\sw$, $\hbeta\sbc$, $\hbeta\ow$, and $\hbeta\obc$. 
Note that if $\pi_0=1$ then the under-sampled estimators become the full data estimator, i.e., $\hbeta\sw=\hbeta\sbc=\hbeta$; if $\lambda_n=0$, then the over-sampled estimators become the full data estimator, i.e., $\hbeta\ow=\hbeta\obc=\hbeta$.

Figure~\ref{fig:so} presents the simulation results. Figure~\ref{fig:so} (a) plots eMSEs ($\times10^3$) against $\pi_0$. When $\pi_0$ is small, the number of controls in under-sampled data is small, and the resulting estimators are not as efficient as the full-data estimator. For example, when $\pi_0=0.005$, the numbers of cases and the numbers of controls are roughly the same, and we do see significant information loss in this case. However, when $\pi_0$ gets larger, under-sampled estimators becomes more efficient, and when $\pi_0>0.1$, the performances of the under-sampled estimators are almost as good as the full-data estimator. In addition, the unweighted estimator $\hbeta\sbc$ is more efficient than the weighted estimator $\hbeta\sw$ for smaller $\pi_0$'s, and they both perform more similarly to the full data estimator $\hbeta$ as $\pi_0$ grows. These observations are consistent with the conclusions in Theorems~\ref{thm:2} and \ref{thm:3}, and the discussions in the relevant remarks. 

Figure~\ref{fig:so} (b) plots eMSEs ($\times10^3$) against $\log(\lambda_n+1)$. We see that the case over-sampled estimators are less efficient than the full data estimator unless the average number of over-sampled cases $\lambda_n$ is very small or very large. For small $\lambda_n$, $\hbeta\ow$ and $\hbeta\obc$ perform similarly, but $\hbeta\ow$ is more efficient than $\hbeta\obc$ for large $\lambda_n$. The reason of this phenomenon is that if $\lambda_n$ is large, then the required condition of $\lambda_ne^{\alpha_t}\rightarrow \cm\in[0,\infty)$ in Theorem~\ref{thm:5} for $\htheta\obc$ may not be valid. This confirms our recommendation that the weighted estimator $\htheta\ow$ is preferable if over-sampling has to be used.

\begin{figure}[H]%
  \centering
  \begin{subfigure}{0.485\textwidth}
    \includegraphics[width=\textwidth,page=2]{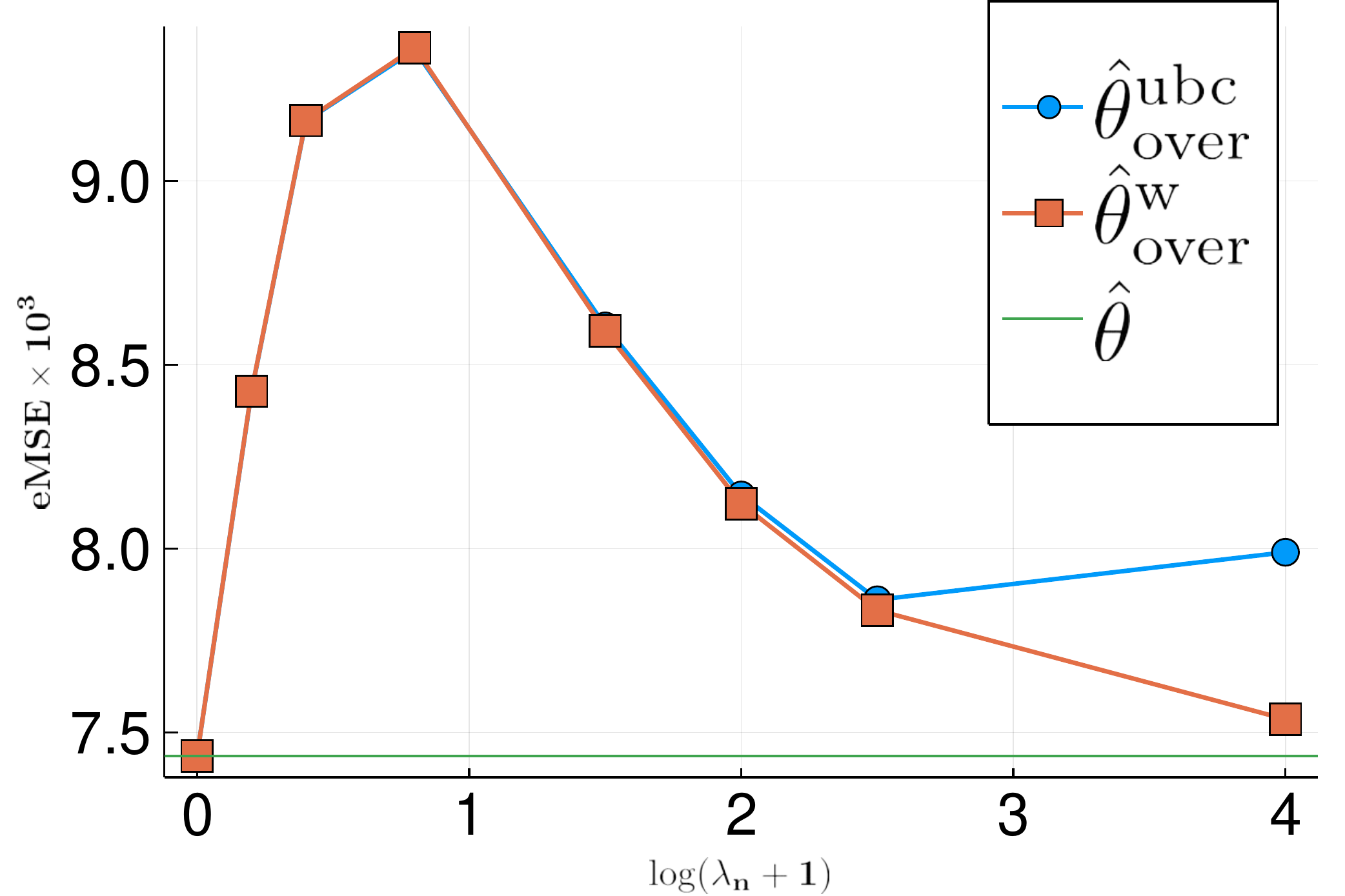}
    \caption{eMSEs ($\times10^3$) for under-sampling}
  \end{subfigure}
  \begin{subfigure}{0.485\textwidth}
    \includegraphics[width=\textwidth,page=1]{subover.pdf}
    \caption{eMSE for over-sampling}
  \end{subfigure}
  \caption{Empirical MSEs ($\times10^3$) of under-sampled and over-sampled estimators. The eMSE ($\times10^3$) for the full data estimator $\htheta$ (the horizontal line) is also plotted for comparison. A smaller eMSE means that the corresponding estimator has a higher estimation efficiency.}
  \label{fig:so}
\end{figure}

\section{Discussion and future research}
\label{sec:disc-future-rese}
In this paper, we have obtained distributional results showing that the amount of information contained in massive data with rare events is at the scale of the relatively small total number of cases rather than the large total number of observations. We have further demonstrated that aggressively under-sampling the controls may not sacrifice the estimation efficiency at all while over-sampling the cases may reduce the estimation efficiency. 

Although the current paper focuses on the logistic regression model, we conjecture that our conclusions are generally true for rare events data and will investigate more complicated and general models in future research projects. As another direction, more comprehensive numerical experiments are helpful to gain further understandings on parameter estimation with imbalanced data. This paper has focused on point estimation. How to make valid and more accurate statistical inference with rare events data still need further research. There is a long standing literature investigating the effects of under-sampling and over-sampling in classification. However, most investigations adopted an empirical approach, so theoretical investigations on the effects of sampling are still needed for classification.

\newpage\appendix
\begin{center} \LARGE Appendix\\[1cm] \end{center}
\setcounter{equation}{0}
\renewcommand{\theequation}{A.\arabic{equation}}
\renewcommand{\thesection}{A.\arabic{section}}
\renewcommand{\thefigure}{A.\arabic{figure}}

In this section, we give prove all theoretical results in the paper. To facilitate the presentation of the proof, %
denote
\begin{equation*}
  a_n=\sqrt{ne^{\alpha_t}}.
\end{equation*} 
The condition that $\Exp(e^{t\|\x\|})<\infty$ for any $t>0$ implies that
\begin{equation}\label{eq:s18}
  \Exp(e^{t_1\|\x\|}\|\z\|^{t_2})<\infty,
\end{equation}
for any $t_1>0$ and $t_2>0$, and we will use this result multiple times in the proof. The inequality in~(\ref{eq:s18}) is true because for any $t_1>0$ and $t_2>0$, we can choose $t>t_1$ and $k>t_2$ so that
\begin{align*}
  e^{t\|\x\|}\ge e^{-t}e^{t\|\z\|}
  =e^{-t}e^{t_1\|\z\|}e^{(t-t_1)\|\z\|}
  \ge\frac{(t-t_1)^ke^{-t}}{k!}e^{t_1\|\x\|}\|\z\|^k
  \ge\frac{(t-t_1)^ke^{-t}}{k!}e^{t_1\|\x\|}\|\z\|^{t_2}.
\end{align*}
 with probability one.

\section{Proof of Theorem~\ref{thm:1}}
\begin{proof}[Proof of Theorem~\ref{thm:1}]
  The estimator $\htheta$ is the maximizer of 
\begin{align}
  \ell(\btheta)
  =&\sumn\big[
     (\alpha+\x_i\tp\bbeta)y_i
     -\log\{1+\exp(\alpha+\x_i\tp\bbeta)\}\big],
\end{align}
so $\u_n=a_n(\hat\btheta-\btheta_t)$ 
is the maximizer of
\begin{align}
  \gamma(\u)=\ell(\btheta_t+a_n^{-1}\u)-\ell(\btheta_t).
\end{align}
By Taylor's expansion,
\begin{align}
  \gamma(\u)
  &=a_n^{-1}\u\tp\dot\ell(\btheta_t)
    +0.5a_n^{-2}\sumn\phi_i(\btheta_t+a_n^{-1}\Acute\u)(\z_i\tp\u)^2,
\end{align}
where $\phi_i(\btheta)=p_i(\alpha,\bbeta)\{1-p_i(\alpha,\bbeta)\}$, and %
\begin{equation*}
  \dot\ell(\btheta)=\frac{\partial\ell(\btheta)}{\partial\btheta}
  =\sumn\{y_i-p_i(\btheta)\}\z_i=\sumn\{y_i-p_i(\alpha,\bbeta)\}\z_i
\end{equation*}
is the gradient of $\ell(\btheta)$, and $\Acute\u$ lies between $\0$ and $\u$.
If we can show that
\begin{align}\label{eq:s3}
  a_n^{-1}\dot\ell(\btheta_t)
  \longrightarrow \Nor\big(\0,\ \M_{f}\big),
\end{align}
in distribution, and for any $\u$,
\begin{align}\label{eq:s4}
  a_n^{-2}\sumn\phi_i(\btheta_t+a_n^{-1}\Acute\u)\z_i\z_i\tp
 \longrightarrow \M_{f},
\end{align}
in probability, then from the Basic Corollary in page 2 of \cite{hjort2011asymptotics}, we know that $a_n(\hat\btheta-\btheta_t)$, the maximizer of $\gamma(\u)$, satisfies that
\begin{equation}\label{eq:s1}
  a_n(\hat\btheta-\btheta_t)
  =\M_{f}^{-1}\times a_n^{-1}\dot\ell(\btheta_t)+\op.
\end{equation}
Slutsky's theorem together with (\ref{eq:s3}) and (\ref{eq:s1}) implies the result in Theorem~\ref{thm:1}. We prove \eqref{eq:s3} and \eqref{eq:s4} in the following. 

Note that %
\begin{equation}
  \dot\ell(\btheta_t)
  =\sumn\big\{y_i-p_i(\alpha_t,\bbeta_t)\big\}\z_i=0,
\end{equation}
is a summation of i.i.d. quantities. Since $\alpha_t\rightarrow-\infty$ as $n\rightarrow\infty$, the distribution of $\{y-p(\alpha_t,\bbeta_t)\}\z$ depends on $n$, we need to use a central limit theorem for triangular arrays. The Lindeberg-Feller central limit theorem \citep[see, Section $^*$2.8 of][]{Vaart:98} is appropriate. 

We exam the mean and variance of $a_n^{-1}\dot\ell(\btheta_t)$. For the mean, from the fact that
\begin{align*}
  \Exp[\{y_i-p_i(\alpha_t,\bbeta_t)\}\z_i]
  =\Exp[\Exp\{y_i-p_i(\alpha_t,\bbeta_t)|\z_i\}\z_i]
  =\0,
\end{align*}
we know that $\Exp\{a_n^{-1}\dot\ell(\btheta_t)\}=\0$.

For the variance,
\begin{align*}
  \Var\{a_n^{-1}\dot\ell(\btheta_t)\}
  &=a_n^{-2}\sumn\Var[\{y_i-p_i(\alpha_t,\bbeta_t)\}\z_i]
  =a_n^{-2}n\Exp\{\phi(\btheta_t)\z\z\tp\}\\
  &=a_n^{-2}n\Exp\bigg\{\frac{e^{\alpha_t+\bbeta_t\tp\x}\z\z\tp}
     {(1+e^{\alpha_t+\bbeta_t\tp\x})^2}\bigg\}
  =\Exp\bigg\{\frac{e^{\bbeta_t\tp\x}\z\z\tp}
     {(1+e^{\alpha_t+\bbeta_t\tp\x})^2}\bigg\}.
\end{align*}
Note that
\begin{align*}
  \frac{e^{\bbeta_t\tp\x}\z\z\tp}{(1+e^{\alpha_t+\bbeta_t\tp\x})^2}
  \longrightarrow e^{\bbeta_t\tp\x}\z\z\tp,
\end{align*}
almost surely, and
\begin{align*}
  \frac{e^{\bbeta_t\tp\x}\|\z\|^2}{(1+e^{\alpha_t+\bbeta_t\tp\x})^2}
  \le e^{\bbeta_t\tp\x}\|\z\|^2
  \quad\text{ with }\quad
  \Exp(e^{\bbeta_t\tp\x}\|\z\|^2)\le\infty.
\end{align*}
Thus, from the dominated convergence theorem,
\begin{align*}
  \Var\{a_n^{-1}\dot\ell(\btheta_t)\}
  =\Exp\bigg\{\frac{e^{\bbeta_t\tp\x}\z\z\tp}
     {(1+e^{\alpha_t+\bbeta_t\tp\x})^2}\bigg\}
    \longrightarrow\Exp\big(e^{\btheta_t\tp\x}\z\z\tp\big).
\end{align*}
Now we check the Lindeberg-Feller condition. For any $\epsilon>0$,
\begin{align*}
  &\sumn\Exp\Big[\|\{y_i-p_i(\alpha_t,\bbeta_t)\}\z_i\|^2
    I(\|\{y_i-p_i(\alpha_t,\bbeta_t)\}\z_i\|>a_n\epsilon)\Big]\\
  &=n\Exp\Big[\|\{y-p(\btheta_t)\}\z\|^2
    I(\|\{y-p(\btheta_t)\}\z\|>a_n\epsilon)\Big]\\
  &=n\Exp\big[p(\btheta_t)\{1-p(\btheta_t)\}^2\|\z\|^2
     I(\|\{1-p(\btheta_t)\}\z\|>a_n\epsilon)\big]\\
  &\quad+n\Exp\big[\{1-p(\btheta_t)\}\{p(\btheta_t)\}^2\|\z\|^2
    I(\|p(\btheta_t)\z\|>a_n\epsilon)\big]\\
  &\le n\Exp\big[p(\btheta_t)\|\z\|^2I(\|\z\|>a_n\epsilon)\big]
    +n\Exp\big[\{p(\btheta_t)\}^2\|\z\|^2
    I(\|p(\btheta_t)\z\|>a_n\epsilon)\big]\\
  &\le a_n^2\Exp\{e^{\|\beta_t\|\|\x\|}\|\z\|^2I(\|\z\|>a_n\epsilon)\}
    +a_n^2\Exp\{e^{\|\beta_t\|\|\x\|}\|\z\|^2I(\|\z\|>a_n\epsilon)\}\\
  &=o(a_n^2),
\end{align*}
where the last step is from the dominated convergence theorem. Thus, applying the Lindeberg-Feller central limit theorem \citep[Section $^*$2.8 of][]{Vaart:98}, we finish the proof of \eqref{eq:s3}.

The last step is to prove \eqref{eq:s4}. We first show that
\begin{align}
  &\bigg|a_n^{-2}\sumn\phi_i(\btheta_t+a_n^{-1}\Acute\u)\|\z_i\|^2
  -a_n^{-2}\sumn\phi_i(\btheta_t)\|\z_i\|^2\bigg|\notag\\
  &\le a_n^{-2}\sumn\big|\phi_i(\btheta_t+a_n^{-1}\Acute\u)
  -\phi_i(\btheta_t)\big|\|\z_i\|^2\notag\\
  &\le \|a_n^{-1}\Acute\u\|a_n^{-2}\sumn p_i(\btheta_t+a_n^{-1}\Breve{\u})\|\z_i\|^3\notag\\
  &=\frac{\|a_n^{-1}\Acute\u\|}{n}\sumn
    \frac{e^{\x_i\tp\bbeta_t+a_n^{-1}\Breve{\u}\tp\z_i}}
    {\{1+e^{\btheta_t\tp\z_i+a_n^{-1}\Breve{\u}\tp\z_i}\}^2}
    \|\z_i\|^3\notag\\
  &\le\frac{\|a_n^{-1}\Acute\u\|}{n}\sumn
    e^{(\|\bbeta_t\|+\|\u\|)(1+\|\x_i\|)}\|\z_i\|^3
    =\op.\label{eq:s7} 
\end{align}
Here $\Breve{\u}$ lies between $\0$ and $\Acute{\u}$, and thus $\|a_n^{-1}\Breve{\u}\|\le\|\u\|$ for $a_n\ge1$.

To finish the proof, we only need to prove that
\begin{align}
  a_n^{-2}\sumn\phi_i(\btheta_t)\z_i\z_i\tp
 \longrightarrow \Exp(e^{\bbeta_t\tp\x}\z\z\tp),
\end{align}
in probability. This is done by noting that
\begin{align}
  a_n^{-2}\sumn\phi_i(\btheta_t)\z_i\z_i\tp
  =&\frac{1}{ne^{\alpha_t}}\sumn
     \frac{e^{\btheta_t\tp\z_i}}
     {(1+e^{\btheta_t\tp\z_i})^2}\z_i\z_i\tp\\
  =&\frac{1}{n}\sumn\frac{e^{\x_i\tp\bbeta_t}}
     {(1+e^{\btheta_t\tp\z_i})^2}\z_i\z_i\tp
  =\Exp(e^{\bbeta_t\tp\x}\z\z\tp)+\op,
\end{align}
by Proposition 1 of \cite{wang2019more}. %
\end{proof}

\section{Proof of Theorem~\ref{thm:2}}
\begin{proof}[Proof of Theorem~\ref{thm:2}]
The estimator $\htheta\sw$ is the maximizer of $\ell\sw(\btheta)$ defined in \eqref{eq:8}, so $\sqrt{a_n}(\htheta\sw-\theta_t)$ is the maximizer of $\gamma\sw(\u)=\ell\sw(\btheta_t+a_n^{-1}\u)-\ell\sw(\btheta_t)$. By Taylor's expansion,
\begin{align}
  \gamma\sw(\u)
  &=\frac{1}{a_n}\u\tp\dot\ell\sw(\btheta_t)
    +\frac{1}{2a_n^2}\sumn\frac{\delta_i}{\pi_i}
    \phi_i(\btheta_t+a_n^{-1}\Acute\u)(\z_i\tp\u)^2,
\end{align}
where %
\begin{equation*}
  \dot\ell\sw(\btheta)
  =\frac{\partial\ell\sw(\btheta)}{\partial\btheta}
  =\sumn\frac{\delta_i}{\pi_i}\{y_i-p_i(\btheta)\}\z_i
  =\sumn\frac{\delta_i}{\pi_i}\{y_i-p_i(\alpha,\bbeta)\}\z_i
\end{equation*}
is the gradient of $\ell\sw(\btheta)$, and $\Acute\u$ lies between $\0$ and $\u$. 
Similarly to the proof of Theorem~\ref{thm:1}, we only need to show that
\begin{align}\label{eq:s9}
  a_n^{-1}\dot\ell\sw(\btheta_t) \longrightarrow
  \Nor\Big[\0,\ \Exp\big\{
  e^{\bbeta_t\tp\x}(1+ce^{\bbeta_t\tp\x})\z\z\tp\big\}\Big],
\end{align}
in distribution, and for any $\u$,
\begin{align}\label{eq:s10}
  a_n^{-2}\sumn\frac{\delta_i}{\pi_i}
  \phi_i(\btheta_t+a_n^{-1}\Acute\u)\z_i\z_i\tp
  \longrightarrow \Exp\big(e^{\bbeta_t\tp\x}\z\z\tp\big),
\end{align}
in probability. 

We prove~\eqref{eq:s9} first. Recall that $\Dn$ is the full data set and $\delta_i=y_i+(1-y_i)I(u_i\le \pi_0)$, satisfying that
\begin{equation*}
  \pi_i=\Exp(\delta_i|\Dn)=y_i+(1-y_i)\pi_0=\pi_0+(1-\pi_0)y_i.
\end{equation*}
We notice that
\begin{equation*}
  \Exp(\delta_i|\z_i)=p_i(\alpha_t,\bbeta_t) + \{1-p_i(\alpha_t,\bbeta_t)\}\pi_0
  =\pi_0+(1-\pi_0)p_i(\alpha_t,\bbeta_t).
\end{equation*}
Let $\eta_i=\frac{\delta_i}{\pi_i}\{y_i-p_i(\btheta_t)\}\z_i$, we know that $\eta_i$, $i=1, ..., n$, are i.i.d., with the underlying distribution of $\eta_i$ being dependent on $n$. From direction calculation, we have
\begin{align*}
  \Exp(\eta_i|\z_i)
  &=\0,\quad\text{ and}\\ \Var(\eta_i|\z_i)
  &=\Exp\bigg[\frac{\{y_i-p_i(\btheta_t)\}^2}
    {\pi_0+y_i(1-\pi_0)}\bigg|\z_i\bigg]\z_i\z_i\tp\\
  &=\big[p_i(\btheta_t)\{1-p_i(\btheta_t)\}^2
    +\pi_0^{-1}\{1-p_i(\btheta_t)\}\{p_i(\btheta_t)\}^2
    \big]\z_i\z_i\tp\\
  &=\big\{1-p_i(\btheta_t)+\pi_0^{-1}p_i(\btheta_t)\big\}
    p_i(\btheta_t)\{1-p_i(\btheta_t)\}\z_i\z_i\tp\\
  &=\frac{1+\pi_0^{-1}e^{\alpha_t+\x_i\tp\bbeta_t}}
    {(1+e^{\alpha_t+\x_i\tp\bbeta_t})^2}p_i(\btheta_t)\z_i\z_i\tp\\
  &\le e^{\alpha_t}(1+\pi_0^{-1}e^{\alpha_t}e^{\x_i\tp\bbeta_t})
    e^{\x_i\tp\bbeta_t}\z_i\z_i\tp.
\end{align*}
Thus, by the dominated convergence theorem, we obtain that 
\begin{align}
  \Var(\eta_i)=\Exp\{\Var(\eta_i|\z_i)\}
  &=e^{\alpha_t}\Exp\Big\{e^{\x_i\tp\bbeta_t}
    (1+ce^{\x_i\tp\bbeta_t})\z_i\z_i\tp\Big\}\{1+o(1)\}.
\end{align}
Now we check the Lindeberg-Feller condition \citep[Section $^*$2.8 of][]{Vaart:98}. For simplicity, let $\pi=\pi_0+(1-\pi_0)y$ and $\delta=y+(1-y)I(u\le \pi)$, where $u\sim\mathbb{U}(0,1)$. 
For any $\epsilon>0$,
\begin{align*}
  \sumn&\Exp\big\{\|\eta_i\|^2I(\|\eta_i\|>a_n\epsilon)\big\}\\
  =&n\Exp\big[\|\pi^{-1}\delta\{y-p(\btheta_t)\}\z\|^2
    I(\|\pi^{-1}\delta\{y-p(\btheta_t)\}\z\|>a_n\epsilon)\big]\\
  =&\pi_0n\Exp\big[\|\pi^{-1}\{y-p(\btheta_t)\}\z\|^2
    I(\|\pi^{-1}\{y-p(\btheta_t)\}\z\|>a_n\epsilon)\big]\\
  &+(1-\pi_0)n\Exp\big[\pi^{-1}\|y\{y-p(\btheta_t)\}\z\|^2
    I(\|\pi^{-1}y\{y-p(\btheta_t)\}\z\|>a_n\epsilon)\big]\\
  =&\pi_0n\Exp\big[p(\btheta_t)\|\{1-p(\btheta_t)\}\z\|^2
     I(\|\{1-p(\btheta_t)\}\z\|>a_n\epsilon)\big]\\
  &+\pi_0^{-1}n\Exp\big[\{1-p(\btheta_t)\}\|p(\btheta_t)\z\|^2
    I(\pi_0^{-1}\|p(\btheta_t)\z\|>a_n\epsilon)\big]\\
  &+(1-\pi_0)n\Exp\big[p(\btheta_t)\|\{1-p(\btheta_t)\}\z\|^2
    I(\|\{1-p(\btheta_t)\}\z\|>a_n\epsilon)\big]\\
  \le&n\Exp\big\{p(\btheta_t)\|\z\|^2
       I(\|\z\|>a_n\epsilon)\big\}
  +n\pi_0^{-1}\Exp\big\{\|p(\btheta_t)\z\|^2
    I(\|\pi_0^{-1}p(\btheta_t)\z\|>a_n\epsilon)\big\}\\
  \le&ne^{\alpha_t}\Exp\big\{e^{\bbeta_t\tp\x}\|\z\|^2
       I(\|\z\|>a_n\epsilon)\big\}
  +n\pi_0^{-1}e^{2\alpha_t}\Exp\big\{e^{\bbeta_t\tp\x}\|\z\|^2
    I(\pi_0^{-1}e^{\alpha_t}e^{\alpha_t}\|\z\|>a_n\epsilon)\big\}\\
  =&o(ne^{\alpha_t})=o(a_n^2),
\end{align*}
where the second last step is from the dominated convergence theorem and the facts that $a_n\rightarrow\infty$ and $\lim_{n\rightarrow\infty}e^\alpha/\pi_0=c<\infty$. Thus, applying the Lindeberg-Feller central limit theorem \citep[Section $^*$2.8 of][]{Vaart:98} finishes the proof of \eqref{eq:s9}.

Now we prove~\eqref{eq:s10}. By direct calculation, we first notice that
\begin{align}
  \Delta_1\equiv
  a_n^{-2}\sumn\frac{\delta_i}{\pi_i}\phi_i(\btheta_t)\z_i\z_i\tp
  =\onen\sumn\frac{\{y_i+(1-y_i)I(u_i\le \pi_0)\}e^{\x_i\tp\bbeta_t}}
    {\pi_i(1+e^{\alpha_t+\x_i\tp\bbeta_t})^2}\z_i\z_i\tp
\end{align}
has a mean of
\begin{align}\label{eq:s20}
  \Exp(\Delta_1)=
  &\Exp\bigg\{\frac{e^{\bbeta_t\tp\x}}
     {(1+e^{\alpha_t+\bbeta_t\tp\x})^2}\z\z\tp\bigg\}
  =\Exp\big(e^{\bbeta_t\tp\x}\z\z\tp\big)+o(1),
\end{align}
where the last step is by the dominated convergence theorem. In addition, the variance of each component of $\Delta_1$ is bounded by
\begin{align}\label{eq:s21}
  &\onen\Exp\bigg\{\frac{e^{2\bbeta_t\tp\x}\|\z\|^4}
    {\pi(1+e^{\alpha_t+\bbeta_t\tp\x})^4}\bigg\}
    \le\frac{\Exp(e^{2\bbeta_t\tp\x}\|\z\|^4)}{n\pi_0} =o(1),
\end{align}
where the last step is because $ne^{\alpha_t}\rightarrow\infty$ and $e^{\alpha_t}/\pi_0\rightarrow c<\infty$ imply that $n\pi_0\rightarrow\infty$. 
From (\ref{eq:s20}) and (\ref{eq:s21}), Chebyshev's inequality implies that $\Delta_1\rightarrow \Exp\big(e^{\bbeta_t\tp\x}\z\z\tp\big)$ in probability. Notice that
\begin{align*}
  &\bigg|a_n^{-2}\sumn
    \frac{\delta_i}{\pi_i}\phi_i(\btheta_t+a_n^{-1}\Acute\u)\|\z_i\|^2
    -a_n^{-2}\sumn\frac{\delta_i}{\pi_i}\phi_i(\btheta_t)\|\z_i\|^2\bigg|\\
  &\le\|a_n^{-1}\Acute\u\|a_n^{-2}\sumn\frac{\delta_i}{\pi_i}
    p_i(\btheta_t+a_n^{-1}\Breve{\u})\|\z_i\|^3\\
  &\le\|a_n^{-1}\Acute\u\|\times\onen\sumn\frac{\delta_i}{\pi_i}
    e^{(\|\bbeta_t\|+\|\u\|)\|\z_i\|}\|\z_i\|^3
    \equiv\|a_n^{-1}\Acute\u\|\times\Delta_2.
\end{align*}
Since $\|a_n^{-1}\Acute\u\|\rightarrow0$, to finish 
the proof of \eqref{eq:s10}, we only need to prove that $\Delta_2$ is bounded in probability. Using an approach similar to (\ref{eq:s20}) and (\ref{eq:s21}), we can show that $\Delta_2$ has a mean that is bounded and a variance that converges to zero. 

\end{proof}

\section{Proof of Theorem~\ref{thm:3}}
\begin{proof}[Proof of Theorem~\ref{thm:3}]
If we use $\Upsilon_{bc}$ to denote the under-sampled objective function shifted by $\b$, i.e., $\Upsilon_{bc}(\btheta)=\ell\su(\btheta-\b)$, then the estimator $\htheta\sbc$ is the maximizer of
\begin{align}\label{eq:s40}
  \Upsilon_{bc}(\btheta)=\sumn\delta_i
  \big[(\btheta-\b)\tp\z_iy_i
  -\log\{1+e^{(\btheta-\b)\tp\z_i}\}\big].
\end{align}
We notice that $\sqrt{a_n}(\htheta\sbc-\btheta_t)$ is the maximizer of $\gamma_{bc}(\u)=\Upsilon_{bc}(\btheta_t+a_n^{-1}\u)-\Upsilon_{bc}(\btheta_t)$. By Taylor's expansion,
\begin{align}
  \gamma_p(\u)
  &=\frac{1}{a_n}\u\tp\dot\Upsilon_{bc}(\btheta_t)
    +\frac{1}{2a_n^2}\sumn\delta_i
    \phi_i(\btheta_t-\b+a_n^{-1}\Acute\u)(\z_i\tp\u)^2,
\end{align}
where %
\begin{equation*}
  \dot\Upsilon_{bc}(\btheta)
  =\frac{\partial\Upsilon_{bc}(\btheta)}{\partial\btheta}
  =\sumn\delta_i\{y_i-p_i(\btheta_t-\b)\}\z_i
  =\sumn\delta_i\{y_i-p_i(\alpha_t-b,\bbeta_t)\}\z_i
\end{equation*}
is the gradient of $\Upsilon_{bc}(\btheta)$, and $\Acute\u$ lies between $\0$ and $\u$.

Similarly to the proof of Theorem~\ref{thm:1}, we only need to show that
\begin{align}\label{eq:s5}
  a_n^{-1}\dot\Upsilon_{bc}(\btheta_t) \longrightarrow
  \Nor\bigg\{\0,\ \Exp\bigg(
  \frac{e^{\bbeta_t\tp\x}\z\z\tp}
  {1+ce^{\bbeta_t\tp\x}}\bigg)\bigg\},
\end{align}
in distribution, and for any $\u$,
\begin{align}\label{eq:s6}
  a_n^{-2}\sumn\delta_i\phi_i(\btheta_t-\b+a_n^{-1}\Acute\u)\z_i\z_i\tp
  \longrightarrow \Exp\bigg(\frac{e^{\x_i\tp\bbeta_t}}{1+ce^{\x_i\tp\bbeta_t}}
     \z_i\z_i\tp\bigg)
\end{align}
in probability. 

We prove \eqref{eq:s5} first.
Define $\eta_{ui}=\delta_i\{y_i-p_i(\alpha_t-b,\bbeta_t)\}\z_i$.
We have that
\begin{align*}
  \Exp(\eta_{ui}|\z_i)
  &=\Exp[\{\pi_0+y_i(1-\pi_0)\}
    \{y_i-p_i(\alpha_t-b,\bbeta_t)\}|\z_i]\z_i\\
  &=[p_i(\alpha_t,\bbeta_t)\{1-p_i(\alpha_t-b,\bbeta_t)\}
    -\pi_0\{1-p_i(\alpha_t,\bbeta_t)\}
    \{p_i(\alpha_t-b,\bbeta_t)\}]\z_i=0,
\end{align*}
which implies that $\Exp(\eta_{ui})=\0$. For the conditional variance
\begin{align*}
  \Var(\eta_{ui}|\z_i)
  &=\Exp[\{\pi_0+y_i(1-\pi_0)\}
    \{y_i-p_i(\alpha_t-b,\bbeta_t)\}^2|\z_i]\z_i\z_i\tp\\
  &=\big[p_i(\alpha_t,\bbeta_t)\{1-p_i(\alpha_t-b,\bbeta_t)\}^2
    +\pi_0\{1-p_i(\alpha_t,\bbeta_t)\}
    \{p_i(\alpha_t-b,\bbeta_t)\}^2\big]\z_i\z_i\tp\\
  &=\frac{e^{\alpha_t+\x_i\tp\bbeta_t}
    +\pi_0e^{2(\alpha_t-b_0+\x_i\tp\bbeta_t)}}
    {(1+e^{\alpha_t+\x_i\tp\bbeta_t})
    (1+e^{\alpha_t-b_0+\x_i\tp\bbeta_t})^2}\z_i\z_i\tp\\
  &=\frac{e^{\alpha_t+\x_i\tp\bbeta_t}}
    {1+e^{\alpha_t-b_0+\x_i\tp\bbeta_t}}
    \{1-p_i(\alpha_t,\bbeta_t)\}\z_i\z_i\tp
  \le e^{\alpha_t}e^{\x_i\tp\bbeta_t}\z_i\z_i\tp,
\end{align*}
where $e^{\x_i\tp\bbeta_t}\z_i\z_i\tp$ is integrable.
Thus, by the dominated convergence theorem, $\Var(\eta_{ui})$ satisfies that
\begin{align}
  \Var(\eta_{ui})=\Exp\{\Var(\eta_{ui}|\z_i)\}
  &=e^{\alpha_t}\Exp\bigg(\frac{e^{\bbeta_t\tp\x}}
    {1+ce^{\bbeta_t\tp\x}}\bigg)\{1+o(1)\}.
\end{align}
Therefore, we have
\begin{align}
  a_n^{-2}\sumn\Var(\eta_{ui})
    \longrightarrow\Exp\bigg(\frac{e^{\bbeta_t\tp\x}}
    {1+ce^{\bbeta_t\tp\x}}\z\z\tp\bigg).
\end{align}
Now we check the Lindeberg-Feller condition. For any $\epsilon>0$,
\begin{align*}
  &\sumn\Exp\big\{\|\eta_{ui}\|^2I(\|\eta_{ui}\|>a_n\epsilon)\big\}\\
  =&n\Exp\big[\|\delta\{y-p(\btheta_t-\b)\}\z\|^2
    I(\|\delta\{y-p(\btheta_t-\b)\}\z\|>a_n\epsilon)\big]\\
  =&\pi_0n\Exp\big[\|\{y-p(\btheta_t-\b)\}\z\|^2
    I(\|\{y-p(\btheta_t-\b)\}\z\|>a_n\epsilon)\big]\\
  &+(1-\pi_0)n\Exp\big[\|y\{y-p(\btheta_t-\b)\}\z\|^2
    I(\|y\{y-p(\btheta_t-\b)\}\z\|>a_n\epsilon)\big]\\
  =&\pi_0n\Exp\big[p(\btheta_t)\|\{1-p(\btheta_t-\b)\}\z\|^2
     I(\|\{1-p(\btheta_t-\b)\}\z\|>a_n\epsilon)\big]\\
  &+\pi_0n\Exp\big[\{1-p(\btheta_t)\}\|p(\btheta_t-\b)\z\|^2
    I(\|p(\btheta_t-\b)\z\|>a_n\epsilon)\big]\\
  &+(1-\pi_0)n\Exp\big[p(\btheta_t)\|\{1-p(\btheta_t-\b)\}\z\|^2
    I(\|\{1-p(\btheta_t-\b)\}\z\|>a_n\epsilon)\big]\\
  \le&n\Exp\big\{p(\btheta_t)\|\z\|^2I(\|\z\|>a_n\epsilon)\big\}
  +\pi_0n\Exp\big[\|p(\btheta_t-\b)\z\|^2
    I(\|\z\|>a_n\epsilon)\big]\\
  \le&ne^{\alpha_t}\Exp\big\{e^{\bbeta_t\tp\x}\|\z\|^2
       I(\|\z\|>a_n\epsilon)\big\}
  +\pi_0^{-1}ne^{2\alpha_t}\Exp\big\{e^{2\bbeta_t\tp\x}\|\z\|^2
    I(\|\z\|>a_n\epsilon)\big\}\\
  =&o(ne^{\alpha_t})=o(a_n^2),
\end{align*}
where the second last step is from the dominated convergence theorem. Thus, applying the Lindeberg-Feller central limit theorem \citep[Section $^*$2.8 of][]{Vaart:98} finishes the proof of \eqref{eq:s5}.

No we prove \eqref{eq:s6}. First, letting
\begin{align}
  \Delta_3\equiv a_n^{-2}\sumn\delta_i\phi_i(\btheta_t-\b)\z_i\z_i\tp
  =\frac{1}{n}\sumn
    \frac{\{y_i+(1-y_i)I(u_i\le \pi_0)\}e^{-b_0+\x_i\tp\bbeta_t}}
     {\{1+e^{\alpha_t-b_0+\x_i\tp\bbeta_t}\}^2}\z_i\z_i\tp,
\end{align}
the mean of $\Delta_3$ satisfies that
\begin{align}
  \Exp(\Delta_3)
  =&\Exp\bigg[
     \frac{e^{\bbeta_t\tp\x}}
     {\{1+e^{\alpha_t+\bbeta_t\tp\x}\}\{1+e^{\alpha_t-b_0+\bbeta_t\tp\x}\}}
     \z\z\tp\bigg]
  =\Exp\bigg(\frac{e^{\bbeta_t\tp\x}}{1+ce^{\bbeta_t\tp\x}}
     \z\z\tp\bigg)+o(1),
\end{align}
by the dominated convergence theorem, and the variance of each component of $\Delta_3$ is bounded by
\begin{align}
  \onen\Exp\bigg[
    \frac{\{y+(1-y)I(u\le \pi_0)\}e^{-2b_0+2\bbeta_t\tp\x}}
     {\{1+e^{\alpha_t-b_0+\bbeta_t\tp\x}\}^4}\|\z\|^4\bigg]
  \le\frac{\Exp(e^{2\bbeta_t\tp\x}\|\z\|^4)}{n\pi_0}=o(1).
\end{align}
Thus, Chebyshev's inequality implies that
\begin{equation}\label{eq:s22}
  \Delta_3\longrightarrow
  \Exp\bigg(\frac{e^{\bbeta_t\tp\x}}{1+ce^{\bbeta_t\tp\x}}\z\z\tp\bigg),
\end{equation}
in probability. Furthermore,
\begin{align}
  &\bigg|a_n^{-2}\sumn
    \delta_i\phi_i(\btheta_t-\b+a_n^{-1}\Acute\u)\|\z_i\|^2
    -a_n^{-2}\sumn\delta_i\phi_i(\btheta_t-\b)\|\z_i\|^2\bigg|\notag\\
  &\le \|a_n^{-1}\Acute\u\|a_n^{-2}\sumn\delta_i
    p_i(\btheta_t-\b+a_n^{-1}\Breve{\u})\|\z_i\|^3\notag\\
  &\le\frac{\|a_n^{-1}\Acute\u\|}{n}\sumn\frac{\delta_i}{\pi_0}
    e^{(\|\bbeta_t\|+\|\u\|)(1+\|\x_i\|)}\|\z_i\|^3
    \equiv\|a_n^{-1}\Acute\u\|\times\Delta_4=\op,\label{eq:s23}
\end{align}
where the last step is because $\Delta_4$ is bounded in probability due to the fact that it has a mean that is bounded and a variance that converges to zero. 
Combing (\ref{eq:s22}) and (\ref{eq:s23}), \eqref{eq:s6} follows.
\end{proof}

\section{Proof of Proposition~\ref{prop1}}
\begin{proof}[Proof of Proposition~\ref{prop1}]
  Let
  \begin{equation*}
    \g=\frac{1}{\sqrt{h}}\big\{\Exp(h^{-1}\v\v\tp)\big\}^{-1}\v
    -\sqrt{h}\big\{\Exp(\v\v\tp)\big\}^{-1}\v.
  \end{equation*}
  Since $\g\g\tp\ge\0$, we have
  \begin{align*}
    \0\le\Exp(\g\g\tp)=
    \big\{\Exp(\v\v\tp)\big\}^{-1}\Exp(h\v\v\tp)
    \big\{\Exp(\v\v\tp)\big\}^{-1}
    -\big\{\Exp(h^{-1}\v\v\tp)\big\}^{-1},
  \end{align*}
which finishes the proof. 
\end{proof}

\section{Proof of Theorem~\ref{thm:4}}
\begin{proof}[Proof of Theorem~\ref{thm:4}]
The estimator $\htheta\ow$ is the maximizer of \eqref{eq:11}, so $\sqrt{a_n}(\htheta\ow-\theta_t)$ is the maximizer of $\gamma\ow(\u)=\ell\ow(\btheta_t+a_n^{-1}\u)-\ell\ow(\btheta_t)$. By Taylor's expansion,
\begin{equation}
  \gamma\ow(\u)
  =\frac{1}{a_n}\u\tp\dot\ell\ow(\btheta_t)
    +\frac{1}{2a_n^2}\sumn\frac{\tau_i}{w_i}
    \phi_i(\btheta_t+a_n^{-1}\Acute\u)(\z_i\tp\u)^2,
\end{equation}
where %
\begin{equation*}
  \dot\ell\ow(\btheta)
  =\frac{\partial\ell\ow(\btheta)}{\partial\btheta}
  =\sumn\frac{\tau_i}{w_i}\{y_i-p_i(\btheta_t)\}\z_i
  =\sumn\frac{\tau_i}{w_i}\{y_i-p_i(\alpha_t-b,\bbeta_t)\}\z_i
\end{equation*}
is the gradient of $\ell\ow(\btheta)$, and $\Acute\u$ lies between $\0$ and $\u$.
Similarly to the proof of Theorem~\ref{thm:1}, we only need to show that
\begin{align}\label{eq:s12}
  a_n^{-1}\dot\ell\ow(\btheta_t) \longrightarrow
  \Nor\Big\{\0,\
  \frac{(1+\lambda)^2+\lambda}{(1+\lambda)^2}
    \Exp\big(e^{\bbeta_t\tp\x}\z\z\tp\big)\Big\},
\end{align}
in distribution, and for any $\u$,
\begin{align}\label{eq:s13}
  a_n^{-2}\sumn\frac{\tau_i}{w_i}
  \phi_i(\btheta_t+a_n^{-1}\Acute\u)\z_i\z_i\tp
  \longrightarrow \Exp\big(e^{\bbeta_t\tp\x}\z\z\tp\big),
\end{align}
in probability.

We prove~\eqref{eq:s12} first. %
Denote $\eta_{owi}=\tau_iw_i^{-1}\{y_i-p_i(\btheta_t)\}\z_i$, so $\eta_{owi}$, $i=1, ..., n$, are i.i.d. with the underlying distribution of $\eta_{owi}$ being dependent on $n$. From direction calculation, we have
\begin{align*}
  \Exp(\eta_{owi}|\z_i)
  &=\0,\quad\text{ and}\\
  \Var(\eta_{owi}|\z_i)
  &=\Exp\bigg[\frac{\{y_i(3\lambda_n+\lambda_n^2)+1\}\{y_i-p_i(\btheta_t)\}^2}
    {(1+\lambda_n y_i)^2}\bigg|\z_i\bigg]\z_i\z_i\tp\\
  &=\bigg[p_i(\btheta_t)\{1-p_i(\btheta_t)\}^2
    \frac{(1+\lambda_n)^2+\lambda_n}{(1+\lambda_n)^2}
    +\{1-p_i(\btheta_t)\}\{p_i(\btheta_t)\}^2\bigg]\z_i\z_i\tp\\
  &=\frac{(1+\lambda_n)^2+\lambda_n}{(1+\lambda_n)^2}
    e^{\alpha_t}e^{\bbeta_t\tp\x}\z_i\z_i\tp\{1+\op\},
\end{align*}
where the $\op$ is bounded. Thus, by the dominated convergence theorem, we obtain that
\begin{align*}
  \Var(\eta_{owi})
  &=\frac{(1+\lambda)^2+\lambda}{(1+\lambda)^2}
    e^{\alpha_t}\Exp\big(e^{\x\tp\bbeta_t}\z\z\tp\big)\{1+o(1)\}.
\end{align*}
Now we check the Lindeberg-Feller condition \citep[Section $^*$2.8 of][]{Vaart:98}. Let $w=1+\lambda_ny$ and $\tau=yv+1$, where $v\sim \mathbb{POI}(\lambda_n)$. For any $\epsilon>0$,
\begin{align*}
  &\sumn\Exp\big[\|\eta_{owi}\|^2I(\|\eta_{owi}\|>a_n\epsilon)\big]\\
  &=n\Exp\big[\|w^{-1}\tau\{y-p(\btheta_t)\}\z\|^2
    I(\|w^{-1}\tau\{y-p(\btheta_t)\}\z\|>a_n\epsilon)\big]\\
  &\le\frac{n}{a_n\epsilon}
    \Exp\big[\|w^{-1}\tau\{y-p(\btheta_t)\}\z\|^3\\
  &=\frac{n}{a_n\epsilon}
    \Exp\bigg[\frac{(1+vy)^3}{(1+\lambda_n y)^3}
    \{y-p(\btheta_t)\}^3\|\z\|^3\bigg]\\
  &\le\frac{n}{a_n\epsilon}
    \frac{1+7\lambda_n+6\lambda_n^2+\lambda_n^3}{(1+\lambda_n)^3}
    \Exp\{p(\btheta_t)\|\z\|^3\}
    +\frac{n}{a_n\epsilon}\Exp[\{p(\btheta_t)\}^3\|\z\|^3]\\
  &\le\frac{a_n}{\epsilon}
    \frac{1+7\lambda_n+6\lambda_n^2+\lambda_n^3}{(1+\lambda_n)^3}
    \Exp(e^{\x_i\tp\bbeta_t}\|\z\|^3)
    +\frac{a_ne^{2\alpha_t}}{\epsilon}
    \Exp(e^{3\x_i\tp\bbeta_t}\|\z\|^3)=o(a_n^2).
\end{align*}
Thus, applying the Lindeberg-Feller central limit theorem \citep[Section $^*$2.8 of][]{Vaart:98} finishes the proof of \eqref{eq:s12}.

Now we prove~\eqref{eq:s13}. Let
\begin{align*}
  \Delta_5\equiv
  a_n^{-2}\sumn\frac{\tau_i}{w_i}\phi_i(\btheta_t)\z_i\z_i\tp
  =\onen\sumn\frac{\tau_i}{w_i}\frac{e^{\x_i\tp\bbeta_t}}
  {(1+e^{\alpha_t+\x_i\tp\bbeta_t})^2}\z_i\z_i\tp.
\end{align*}
Since
\begin{align*}
  \Exp(\Delta_5)=
  &\Exp\bigg\{\frac{e^{\bbeta_t\tp\x}}
     {(1+e^{\alpha_t+\bbeta_t\tp\x})^2}\z\z\tp\bigg\}
  =\Exp\big(e^{\bbeta_t\tp\x}\z\z\tp\big)+o(1),
\end{align*}
by the dominated convergence theorem, and each component of $\Delta_5$ has a variance that is bounded by
\begin{align*}
  &\onen\Exp\bigg\{\frac{2e^{2\bbeta_t\tp\x}\|\z\|^4}
    {(1+e^{\alpha_t+\bbeta_t\tp\x})^4}\bigg\}
    \le\frac{2\Exp(e^{2\bbeta_t\tp\x}\|\z\|^4)}{n}
    =o(1),
\end{align*}
applying Chebyshev's inequality gives that
\begin{equation*}
  \Delta_5\longrightarrow\Exp\big(e^{\bbeta_t\tp\x}\z\z\tp\big),
\end{equation*}
in probability. Thus, \eqref{eq:s13} follows from the fact that
\begin{align*}
  &\bigg|a_n^{-2}\sumn
    \frac{\tau_i}{w_i}\phi_i(\btheta_t+a_n^{-1}\Acute\u)\|\z_i\|^2
    -a_n^{-2}\sumn\frac{\tau_i}{w_i}\phi_i(\btheta_t)\|\z_i\|^2\bigg|\\
  &\le\|a_n^{-1}\Acute\u\|a_n^{-2}\sumn\frac{\tau_i}{w_i}
    p_i(\btheta_t+a_n^{-1}\Breve{\u})\|\z_i\|^3\\
  &\le\frac{\|a_n^{-1}\Acute\u\|}{n}\sumn\frac{\tau_i}{w_i}
    e^{(\|\bbeta_t\|+\|\u\|)\|\z_i\|}\|\z_i\|^3
    =\op,
\end{align*}
where the last step is because $n^{-1}\sumn\tau_iw_i^{-1}
e^{(\|\bbeta_t\|+\|\u\|)\|\z_i\|}\|\z_i\|^3$ has a bounded mean and a bounded variance and thus it is bounded in probability. 
\end{proof}

\section{Proof of Theorem~\ref{thm:5}}
\begin{proof}[\bf Proof of Theorem~\ref{thm:5}]
The over-sampled estimator $\htheta\obc$ is the maximizer of
\begin{equation}\label{eq:s40}
  \Upsilon_{oc}(\btheta)=\frac{1}{1+\lambda_n}
  \sumn\tau_i\big[(\btheta+\b_o)\tp\z_iy_i
  -\log\{1+e^{\z_i\tp(\btheta+\b_o)}\}\big].
\end{equation}
Thus, $\sqrt{a_n}(\htheta\obc-\btheta_t)$ is the maximizer of $\gamma_{oc}(\u)=\Upsilon_{oc}(\btheta_t+a_n^{-1}\u)-\Upsilon_{oc}(\btheta_t)$. By Taylor's expansion,
\begin{equation}
  \gamma_{oc}(\u)
  =\frac{1}{a_n}\u\tp\dot\Upsilon_{oc}(\btheta_t)
    +\frac{1}{2a_n^2(1+\lambda_n)}\sumn\tau_i
    \phi_i(\btheta_t+\b_o+a_n^{-1}\Acute\u)(\z_i\tp\u)^2,
  \end{equation}
where %
\begin{equation*}
\dot\Upsilon_{oc}(\btheta)
=\frac{\partial\Upsilon_{oc}(\btheta)}{\partial\btheta}
=\frac{1}{1+\lambda_n}\sumn\tau_i\{y_i-p_i(\alpha_t+b_{o0},\bbeta_t)\}\z_i
\end{equation*}
is the gradient of $\Upsilon_{oc}(\btheta)$, and $\Acute\u$ lies between $\0$ and $\u$.

Similarly to the proof of Theorem~\ref{thm:1}, we only need to show that
\begin{align}\label{eq:s14}
  a_n^{-1}\dot\Upsilon_{oc}(\btheta_t) \longrightarrow
  \Nor\bigg[\0,\ \frac{(1+\lambda)^2+\lambda}{(1+\lambda)^2}
  \ \Exp\bigg\{\frac{e^{\bbeta_t\tp\x}}
    {(1+\cm e^{\bbeta_t\tp\x})^2}\z\z\tp\bigg\}\bigg],
\end{align}
in distribution, and for any $\u$,
\begin{align}\label{eq:s15}
  \frac{1}{a_n^2(1+\lambda_n)}
  \sumn\tau_i\phi_i(\btheta_t+\b_o+a_n^{-1}\Acute\u)\z_i\z_i\tp
  \longrightarrow \Exp\bigg(\frac{e^{\bbeta_t\tp\x}}
    {1+\cm e^{\bbeta_t\tp\x}}\z\z\tp\bigg),
\end{align}
in probability.

We prove \eqref{eq:s14} first. Let $\eta_{obi}=(1+\lambda_n)^{-1}\tau_i\{y_i-p_i(\alpha_t+b_{o0},\bbeta_t)\}\z_i$.
We have that
\begin{align*}
  (1+\lambda_n)\Exp(\eta_{obi}|\z_i)
  &=\Exp[(1+\lambda_n y_i)\{y_i-p_i(\alpha_t+b_{o0},\bbeta_t)\}|\z_i]\z_i\\
  &=[p_i(\alpha_t,\bbeta_t)(1+\lambda_n)
    \{1-p_i(\alpha_t+b_{o0},\bbeta_t)\}\\
  &\quad  -\{1-p_i(\alpha_t,\bbeta_t)\}
    \{p_i(\alpha_t+b_{o0},\bbeta_t)\}]\z_i=0,
\end{align*}
which implies that $\Exp(\eta_{obi})=\0$. For the conditional variance
\begin{align*}
  &(1+\lambda_n)^2\Var(\eta_{obi}|\z_i)\\
  &=\Exp[\{1+3\lambda_n y_i+\lambda_n^2 y_i\}
    \{y_i-p_i(\alpha_t+b_{o0},\bbeta_t)\}^2|\z_i]\z_i\z_i\tp\\
  &=\big[p_i(\alpha_t,\bbeta_t)(1+3\lambda_n+\lambda_n^2)
    \{1-p_i(\alpha_t+b_{o0},\bbeta_t)\}^2
    +\{1-p_i(\alpha_t,\bbeta_t)\}
    \{p_i(\alpha_t+b_{o0},\bbeta_t)\}^2\big]\z_i\z_i\tp\\
  &=\frac{(1+3\lambda_n+\lambda_n^2)e^{\alpha_t+\x_i\tp\bbeta_t}
    +e^{2(\alpha_t+b_{o0}+\x_i\tp\bbeta_t)}}
    {(1+e^{\alpha_t+\x_i\tp\bbeta_t})
    (1+e^{\alpha_t+b_{o0}+\x_i\tp\bbeta_t})^2}\z_i\z_i\tp\\
  &=\frac{(1+3\lambda_n+\lambda_n^2)e^{\alpha_t+\x_i\tp\bbeta_t}}
    {(1+e^{\alpha_t+b_{o0}+\x_i\tp\bbeta_t})^2}
    \frac{1+\frac{1+2\lambda_n+\lambda_n^2}{1+3\lambda_n+\lambda_n^2}
    e^{\alpha_t+\x_i\tp\bbeta_t}}{1+e^{\alpha_t+\x_i\tp\bbeta_t}}\z_i\z_i\tp\\
  &=\frac{(1+3\lambda_n+\lambda_n^2)e^{\alpha_t+\x_i\tp\bbeta_t}}
    {(1+e^{\alpha_t+b_{o0}+\x_i\tp\bbeta_t})^2}\z_i\z_i\tp\{1+\op\}\\
  &=e^{\alpha_t}(1+3\lambda_n+\lambda_n^2)
    \frac{e^{\x_i\tp\bbeta_t}}{(1+\cm e^{\x_i\tp\bbeta_t})^2}
    \z_i\z_i\tp\{1+\op\},
\end{align*}
where the $\op$'s above are all bounded and the last step is because $(1+\lambda_n)e^{\alpha_t}\rightarrow\cm$. 
Thus, by the dominated convergence theorem, $\Var(\eta_{obi})$ satisfies that
\begin{align}
  \Var(\eta_{obi})
  &=e^{\alpha_t}\frac{(1+\lambda)^2+\lambda}{(1+\lambda)^2}
    \Exp\bigg\{\frac{e^{\bbeta_t\tp\x}}
    {(1+\cm e^{\bbeta_t\tp\x})^2}\bigg\}\{1+o(1)\},
\end{align}
which indicates that
\begin{align}
  \frac{1}{a_n^2}\sumn\Var(\eta_{obi})
  \longrightarrow\frac{(1+\lambda)^2+\lambda}{(1+\lambda)^2}
  \ \Exp\bigg\{\frac{e^{\bbeta_t\tp\x}}
    {(1+\cm e^{\bbeta_t\tp\x})^2}\z\z\tp\bigg\}.
\end{align}
Now we check the Lindeberg-Feller condition. Recall that $\tau=yv+1$, where $v\sim \mathbb{POI}(\lambda_n)$. We can show that $\Exp\{(1+v)^3\}<2(1+\lambda_n)^3$. For any $\epsilon>0$,
\begin{align*}
  &a_n\epsilon(1+\lambda_n)^3
    \sumn\Exp\big\{\|\eta_{obi}\|^2I(\|\eta_{obi}\|>a_n\epsilon)\big\}
  \le (1+\lambda_n)^3\sumn\Exp(\|\eta_{obi}\|^3)\\
  &= n\Exp\big[\|\tau^3\{y-p(\btheta_t+\b_o)\}\z\|^3\big]\\
  &=n\Exp\big[p(\btheta_t)(1+v)^3\|\{1-p(\btheta_t+\b_o)\}\z\|^3\big]
  +n\Exp\big[\{1-p(\btheta_t)\}\|p(\btheta_t+\b_o)\z\|^3\big]\\
  &\le 2n(1+\lambda_n)^3\Exp\big\{p(\btheta_t)\|\z\|^3\big\}
  +n\Exp\big\{\|p(\btheta_t+\b_o)\z\|^3\big\}\\
  &\le 2n(1+\lambda_n)^3e^{\alpha_t}\Exp\big(e^{\bbeta_t\tp\x}\|\z\|^3\big)
  +n(1+\lambda_n)^3e^{3\alpha_t}\Exp\big(e^{3\bbeta_t\tp\x}\|\z\|^3\big)\\
  &=(1+\lambda_n)^3 O(a_n^2).
\end{align*}
This indicates that $a_n^{-2}\sumn\Exp\{\|\eta_{obi}\|^2I(\|\eta_{obi}\|>a_n\epsilon)\}=o(1)$, and thus the Lindeberg-Feller condition holds. Applying the Lindeberg-Feller central limit theorem \citep[Section $^*$2.8 of][]{Vaart:98} finishes the proof of \eqref{eq:s14}.

No we prove \eqref{eq:s15}. Let
\begin{align}
  \Delta_6
  \equiv\frac{1}{a_n^2(1+\lambda_n)}
  \sumn\tau_i\phi_i(\btheta_t+\b_o)\z_i\z_i\tp
  =\frac{1}{n}\sumn
    \frac{(1+v_iy_i)e^{\x_i\tp\bbeta_t}}
     {\{1+e^{\alpha_t+b_{o0}+\x_i\tp\bbeta_t}\}^2}\z_i\z_i\tp.
\end{align}
Note that
\begin{align}
  \Exp(\Delta_6)
  &=\Exp\bigg\{\frac{(1+\lambda_n y)e^{\bbeta_t\tp\x}}
     {(1+e^{\alpha_t+b_{o0}+\bbeta_t\tp\x})^2}\z\z\tp\bigg\}\\
  &=\Exp\bigg\{\frac{e^{\bbeta_t\tp\x}}
    {(1+e^{\alpha_t+\bbeta_t\tp\x})
    (1+e^{\alpha_t+b_{o0}+\bbeta_t\tp\x})}\z\z\tp\bigg\}\\
  &=\Exp\bigg(\frac{e^{\bbeta_t\tp\x}}
    {1+\cm e^{\bbeta_t\tp\x}}\z\z\tp\bigg)+o(1),
\end{align}
by the dominated convergence theorem, and the variance of each component of $\Delta_6$ is bounded by
\begin{align*}
  &\onen\Exp\bigg[
    \frac{(1+vy)^2e^{2\bbeta_t\tp\x}}
     {\{1+e^{\alpha_t+b_{o0}+\bbeta_t\tp\x}\}^4}\|\z\|^4\bigg]\\
  &=\onen\Exp\bigg[
    \frac{\{1+(3\lambda_n+\lambda_n^2)p(\btheta_t)\}e^{2\bbeta_t\tp\x}}
     {\{1+e^{\alpha_t+b_{o0}+\bbeta_t\tp\x}\}^4}\|\z\|^4\bigg]\\
  &\le\frac{\Exp(e^{2\bbeta_t\tp\x}\|\z\|^4)}{n}+
    \frac{e^{\alpha_t}(3\lambda_n+\lambda_n^2)}{n}
    \Exp\big(e^{3\bbeta_t\tp\x}\|\z\|^4\big)=o(1),
\end{align*}
where the last step is because $n^{-1}e^{\alpha_t}\lambda_n^2=(e^{\alpha_t}\lambda_n)^2a_n^{-2}\rightarrow0$ and both expectations are finite.
Therefore, Chebyshev's inequality implies that $\Delta_6\rightarrow0$ in probability. Thus, \eqref{eq:s15} follows from the fact that
\begin{align*}
  &\bigg|\frac{1}{a_n^2(1+\lambda_n)}\sumn
    \tau_i\phi_i(\btheta_t+\b_o+a_n^{-1}\Acute\u)\|\z_i\|^2
    -\frac{1}{a_n^2(1+\lambda_n)}
    \sumn\tau_i\phi_i(\btheta_t+\b_o)\|\z_i\|^2\bigg|\\
  &\le\frac{\|a_n^{-1}\Acute\u\|}{a_n^2(1+\lambda_n)}\sumn\tau_i
    p_i(\btheta_t+\b_o+a_n^{-1}\Breve{\u})\|\z_i\|^3\\
  &\le\frac{\|a_n^{-1}\Acute\u\|}{n}\sumn
    (1+v_iy_i)e^{(\|\bbeta_t\|+\|\u\|)\|\z_i\|}\|\z_i\|^3
    =\op,
\end{align*}
where the last step is from the fact that $n^{-1}\sumn(1+v_iy_i)e^{(\|\bbeta_t\|+\|\u\|)\|\z_i\|}\|\z_i\|^3$ has a bounded mean and a bounded variance, and an application of Chebyshev's inequality.
\end{proof}

\bibliographystyle{natbib}
\bibliography{logisticref}

\end{document}